\setlist{nolistsep}
\DeclareMathOperator*{\argmax}{argmax}
\DeclareMathOperator*{\argmin}{argmin}
\newtheorem{lemma}{Lemma}
\newtheorem{theorem}{Theorem}
\newtheorem{proposition}{Proposition}
\newtheorem{example}{Example}
\let\svthefootnote\thefootnote
\title{Denoising Criterion for Variational Auto-encoding Framework}
\author{Daniel Jiwoong Im, Sungjin Ahn, Roland Memisevic, Yoshua Bengio$^\dagger$\\
Montreal Institute for Learning Algorithms\\
University of Montreal\\
Montreal, QC, H3C 3J7\\
\texttt{\{imdaniel,ahnsungj,memisevr,<findme>\}@iro.umontreal.ca}\\
}
\begin{document}

\maketitle

\let\thefootnote\relax\footnote{$^\dagger$CIFAR Senior Fellow}
\addtocounter{footnote}{-1}\let\thefootnote\svthefootnote

\begin{abstract}
Denoising autoencoders (DAE) are trained to reconstruct their clean inputs
with noise injected at the input level, while variational autoencoders
(VAE) are trained with noise injected in their stochastic hidden layer,
with a regularizer that encourages this noise injection. In this paper,
we show that injecting noise both in input and in the stochastic hidden
layer can be advantageous and we propose a modified variational lower
bound as an improved objective function in this setup. When input is corrupted,
then the standard VAE lower bound involves marginalizing the encoder conditional
distribution over the input noise, which makes the training criterion intractable.
Instead, we propose a modified training criterion which corresponds to
a tractable bound when input is corrupted.
Experimentally, we find that the proposed
denoising variational autoencoder (DVAE) yields better average log-likelihood than the
VAE and the importance weighted autoencoder on the MNIST and Frey Face datasets.
\end{abstract}

\section{Introduction}
Variational inference \citep{Jordan1999} has been a core component of approximate Bayesian inference along with the Markov chain Monte Carlo (MCMC) method \citep{Neal93}. It has been popular to many researchers and practitioners because the problem of learning an intractable posterior distribution is formulated as an optimization problem which has many advantages compared to MCMC; (i) we can easily take advantage of many advanced optimization tools \citep{kingma2014adam, duchi2011adaptive,zeiler2012adadelta}, (ii) the training by optimization is usually faster than the MCMC sampling, and (iii) unlike MCMC where it is difficult to decide when to finish the sampling, the stopping criterion in variational inference is clear. 

One remarkable recent advance in variational inference is to use the inference network (also known as the recognition network) as the approximate posterior distribution \citep{Kingma2014vae,Rezende2014,dayan1995helmholtz,bornschein2014reweighted}. Unlike the traditional variational inference where  different variational parameters are required for each latent variable, in the inference network, the approximate posterior distribution for each latent variable is conditioned on an observation and the parameters are shared among the latent variables. Combined with advances in training techniques such as the re-parameterization trick and the REINFORCE \citep{Williams1992,Mnih2014}, it became possible to train  variational inference models efficiently for large-scale datasets.

Despite these advances, it is still a major problem to obtain a class of variational distributions which is flexible enough to accurately model the true posterior distribution. For instance, in the variational autoencoder (VAE), in order to achieve efficient training, each dimension of the latent variable is assumed to be independent each other (i.e., there are factorized) and modeled by a univariate Gaussian distribution whose parameters (i.e., the mean and the variance) are obtained by a nonlinear projection of the input using a neural network. Even when VAE performs well in practice for a rather simple problems such as generating small and simple images (e.g., MNIST), it is still required to relax this strong restriction on the variational distributions in order to apply it to more complex real-world problems. Recently, there have been efforts in this direction. \citet{Salimans2015} integrated MCMC steps into the variational inference such that the variational distribution becomes closer to the target distribution as it takes more MCMC steps inside each iteration of the variational inference. Similar ideas but applying a sequence of invertible non-linear transformations rather than MCMC are also proposed by \citep{Dinh2015} and \citep{Rezende2015}.

On the other hand, the denoising criterion, where the input is corrupted by adding some noise and the model is asked to recover the original input, has been studied extensively for deterministic generative models \citep{Seung1998,Vincent2008,bengio2013generalized}. The study showed that the denoising criterion plays an important role in achieving good generalization performance \citep{Vincent2008} because it makes the nearby data points in the low dimensional manifold to be robust against the presence of small noise in the high dimensional observation space  \citep{Seung1998, Vincent2008, Rifai2011, Guillaume2014, Im2016}. Therefore, it is natural to ask if the denoising criterion (where we add the noise to the inputs) can also be advantageous for the variational auto-encoding framework where the noise is added to the latent variables, not the inputs, and if so, how can we formulate the problem for efficient training. Although it has not been considerably studied how to combine these, there has been some evidences of its usefulness\footnote{In practice, it turned out to be useful to \textit{augment} the dataset by adding some random noise to the inputs. However, in denoising criterion, unlike the augmenting, the model tries to recover the original data, not the corrupted one.}. For example, \citet{Rezende2014} pointed out that injecting additional noise to the recognition model is crucial to achieve the reported accuracy for unseen data, advocating that in practice denoising can help the regularization of probabilistic generative models as well.

In this paper, motivated by the DAE and the VAE, we study the denoising criterion for variational inference based on recognition networks, which we call the \textit{variational auto-encoding framework} throughout. 
Our main contributions are as follows. We introduce a new class of approximate distributions where the recognition network is obtained by marginalizing the input noise over a corruption distribution, 
and thus provides capacity to obtain a more flexible approximate distribution class such as the mixture of Gaussian. 
Because applying this approximate distribution to the standard VAE objective makes the training intractable, we propose a new objective, 
called the denoising variational lower bound, and show that, given a sensible corruption function, this is 
(i) tractable and efficient to train
, and (ii) easily applicable to many existing models such as the variational autoencoder, the importance reweighted autoencoder (IWAE) \citep{Burda2015}, and the neural variational inference and learning (NVIL) \citep{Mnih2014}.
In the experiments, we empirically demonstrate that the proposed denoising criterion for variational auto-encoding framework helps to improve the performance in both the variational autoencoders and the importance weighted autoencoders (IWAE) on the binarized MNIST dataset and the Frey Face dataset. 

\section{Background}
 
Variational inference is an approximate inference method where the goal is to approximate the intractable posterior distribution $p(\bz|\bx)$, by a tractable approximate distribution $q_\phi(\bz)$. Here, $\bx$ is the observation and $\bz \in \eR^D$ is the model parameters or latent variables. To keep it tractable, the approximate distributions are limited to a restricted family of distributions $q_\phi \in \cQ$ parameterized by variational parameters $\phi$. For example, in the mean-field variational inference \citep{Jordan1999}, the distributions in $\cQ$ treat all dependent variables as independent, i.e., $q(\bz) = \prod q_d(z_d)$. 

The basic idea of obtaining the optimal approximate distribution $q_{\phi^*} \in \cQ$ is to find the variational parameter $\phi^*$ that minimizes the Kullback-Leibler (KL) divergence between the approximate distribution $q_{\phi}$ and the target distribution $p$. Although the KL divergence itself involves the intractable target distribution, instead of directly minimizing the KL divergence, we can bypass it by decomposing the marginal log-likelihood as follows:
\bea
\log p(\bx) = \eE_{q_\phi(\bz)}\left[ \log \f{p(\bx, \bz)}{q_\phi(\bz)}\right] + \KL(q_\phi(\bz) || p(\bz|\bx)).
\eea
That is, observing that the marginal log-likelihood $\log p(\bx)$ is independent of the variational distribution $q_\phi$ and the KL term is non-negative, instead of minimizing the KL divergence, we can maximize the first term, called the variational lower bound, which is the same as minimizing the KL divergence term. Thus, in variational inference, we transform the problem of learning a distribution to an optimization problem of maximizing the variational lower bound with respect to the variational parameter $\phi$.

\subsection{Variational AutoEncoders}

The variational autoencoder (VAE) \citep{Kingma2014vae,Rezende2014} is a particular
type of variational inference framework which is closely related to our
focus in this work. With the VAE, the posterior distribution is defined as
$p_\ta(\bz|\bx) \propto p_\ta(\bx|\bz) p(\bz)$. Specifically, we define a
prior $p(\bz)$ on the latent variable $\bz \in \eR^D$, which is usually set
to an isotropic Gaussian distribution $\cN(0, \sig\eI_D)$. Then, we use a
parameterized distribution to define the observation model
$p_\ta(\bx|\bz)$. A typical choice for the parameterized distribution is to
use a neural network where the input is $\bz$ and output a parametric
distribution over $\bx$, such as the Gaussian or Bernoulli,
depending on the data type.
Then, $\ta$ becomes the weights of the neural network. We call
this network $p_\ta(\bx|\bz)$ the \textit{generative network}. Due to the
complex nonlinearity of the neural network, the posterior distribution
$p_\ta(\bz|\bx)$ is intractable.

One interesting aspect of VAE is that the approximate distribution $q$ is
conditioned on the observation $\bx$, resulting in a form
$q_\phi(\bz|\bx)$. Similar to the generative network, we use a neural
network for $q_\phi(\bz|\bx)$ with $\bx$ and $\bz$ as its input and output,
respectively. The variational parameter $\phi$, which is also the weights
of the neural network, is shared among all observations. We call this
network $q_\phi(\bz|\bx)$ the \textit{inference network},
\textit{recognition network}.

The objective of VAE is to maximize the following variational lower bound with respect to the parameters $\ta$ and $\phi$.
\bea
\log p_\ta(\bx) &\ge& \eE_{q_\phi(\bz|\bx)}\left[ \log \f{p_\ta(\bx, \bz)}{q_\phi(\bz|\bx)}\right]\\ 
\eqa \eE_{q_\phi(\bz|\bx)}\left[ \log p_\ta(\bx|\bz)\right] - \KL(q_\phi(\bz|\bx)||p(\bz))\label{eqn:vae}.
\eea

Note that in Eqn. \eqref{eqn:vae}, we can interpret the first term as a
reconstruction accuracy through an autoencoder with noise injected in
the hidden layer that is the output of the inference network, 
and the second term as a regularizer which enforces
the approximate posterior to be close to the prior and maximizes
the entropy of the injected noise.

The earlier approaches to train this type of model were based on the
variational EM algorithm: in the E-step, fixing $\ta$, we update $\phi$
such that the approximate distribution $q_\phi(\bz|\bx)$ close to the true
posterior distribution $p_\ta(\bz|\bx)$, and then in the M-step, fixing
$\phi$, we update $\ta$ to increase the marginal log-likelihood. However,
with the VAE it is possible to apply the backpropagation on the variational
parameter $\phi$ by using the re-parameterization trick
\citep{Kingma2014vae}, considering $\bz$ as a function of i.i.d. noise
and of the output of the encoder (such as the mean and variance of the
Gaussian). Armed with the gradient on these parameters, the gradient on the
generative network parameters $\ta$ can readily be computed by back-propagation, and thus we can
jointly update both $\phi$ and $\ta$ using efficient optimization
algorithms such as the stochastic gradient descent.

Although our exposition in the following proceeds mainly with the VAE model
for simplicity, the proposed method can be applied to a more general class
of variational inference methods which use the inference network 
$q_\phi(\bz|\bx)$ for the approximate distribution. This includes other
recent models such as the importance weighted autoencoders (IWAE), the
neural variational inference and learning (NVIL), and DRAW
\citep{Gregor2015}.

\section{Denoising Criterion in Variational Framework} \label{sec:dcvf}
With the denoising autoencoder criterion~\citep{Seung1998, Vincent2008}, the input is corrupted according
to some noise distribution, and the model needs to learn to reconstruct the
original input or maximize the log-probability of the clean input $\mathbf{x}$,
given the corrupted input $\mathbf{\tilde{x}}$.
Before applying the denoising criterion to the variational
autoencoder, we shall investigate a synthesized inference 
formulation of VAE in order to comprehend the consequences of the 
denoising criterion.

\begin{proposition}
    Let $q_\phi(\mathbf{z}|\mathbf{\tilde{x}})$ be a Gaussian distribution
    such that $q_\phi(\mathbf{z}|\mathbf{\tilde{x}}) = \mathcal{N}(\mathbf{z}|\mu_\phi(\mathbf{\tilde{x}}),\sigma_\phi(\mathbf{\tilde{x}}))$
    where $\mu_\phi(\mathbf{\tilde{x}})$ and $\sigma_\phi(\mathbf{\tilde{x}})$ are non-linear functions of $\mathbf{\tilde{x}}$.
    Let $p(\mathbf{\tilde{x}}|\mathbf{x})$ be a known corruption
    distribution around $\mathbf{x}$. Then, 
    \begin{equation}
        \mathbb{E}_{p(\mathbf{\tilde{x}}|\mathbf{x})} \left[ q_\phi(\mathbf{z}|\mathbf{\tilde{x}}) \right] 
        = \int_\mathbf{\tilde{x}} q_\phi(\mathbf{z}|\mathbf{\tilde{x}})p(\mathbf{\tilde{x}}|\mathbf{x})d\mathbf{\tilde{x}}
            \label{eqn:pro1}
    \end{equation}
    is a mixture of Gaussian.
\end{proposition}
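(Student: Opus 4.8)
The plan is to recognize that the right-hand side of \eqref{eqn:pro1} is, essentially by definition, a mixture of Gaussian densities in $\mathbf{z}$ whose components are indexed by $\mathbf{\tilde{x}}$ and whose mixing density is $p(\mathbf{\tilde{x}}|\mathbf{x})$. First I would fix $\mathbf{x}$ and observe that for each value of $\mathbf{\tilde{x}}$ the factor $q_\phi(\mathbf{z}|\mathbf{\tilde{x}}) = \mathcal{N}(\mathbf{z}|\mu_\phi(\mathbf{\tilde{x}}),\sigma_\phi(\mathbf{\tilde{x}}))$ is a Gaussian density in $\mathbf{z}$, while $p(\mathbf{\tilde{x}}|\mathbf{x})$ is a probability density in $\mathbf{\tilde{x}}$ — that is, nonnegative with $\int_{\mathbf{\tilde{x}}} p(\mathbf{\tilde{x}}|\mathbf{x})\,d\mathbf{\tilde{x}} = 1$. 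Hence the integrand is nonnegative, and by Tonelli's theorem the integral over $\mathbf{\tilde{x}}$ yields a well-defined density in $\mathbf{z}$ (it integrates to $1$, since each $q_\phi(\cdot|\mathbf{\tilde{x}})$ does). This is exactly the form of a continuous mixture of Gaussians, with $p(\mathbf{\tilde{x}}|\mathbf{x})$ acting as the mixing measure and the map $\mathbf{\tilde{x}} \mapsto (\mu_\phi(\mathbf{\tilde{x}}),\sigma_\phi(\mathbf{\tilde{x}}))$ sending the mixing index to the component parameters.

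If a discrete mixture picture is preferred, I would instead partition the support of $p(\mathbf{\tilde{x}}|\mathbf{x})$ into cells $\{A_i\}$, choose a representative $\mathbf{\tilde{x}}_i \in A_i$, and write the integral as the limit of the Riemann sums $\sum_i w_i\, \mathcal{N}(\mathbf{z}|\mu_\phi(\mathbf{\tilde{x}}_i),\sigma_\phi(\mathbf{\tilde{x}}_i))$ with weights $w_i = \int_{A_i} p(\mathbf{\tilde{x}}|\mathbf{x})\,d\mathbf{\tilde{x}} \ge 0$ satisfying $\sum_i w_i = 1$. Each such sum is a finite Gaussian mixture, and the statement then follows from the fact that the integral is obtained as the (pointwise in $\mathbf{z}$) limit of these finite mixtures as the mesh shrinks, justified by dominated convergence.

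The only point that needs any care — rather than a genuine obstacle — is pinning down what ``mixture of Gaussian'' means when the mixing variable $\mathbf{\tilde{x}}$ is continuous: one must commit either to the continuous-mixture definition or to the limiting discrete description above, and invoke Tonelli (or dominated convergence) to interchange the integration over $\mathbf{z}$ with that over $\mathbf{\tilde{x}}$ and to pass to the limit. It is also worth noting, though not required by the statement, that since $\mu_\phi$ and $\sigma_\phi$ are genuinely nonlinear, the pushforward of $p(\mathbf{\tilde{x}}|\mathbf{x})$ onto the component parameters is non-degenerate, so the resulting mixture is in general strictly more expressive than a single Gaussian — which is precisely the motivation this proposition is meant to establish.
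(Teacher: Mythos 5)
Your argument is correct and takes essentially the same route as the paper, which offers no formal proof but treats the claim as holding by definition of a mixture: each draw $\mathbf{\tilde{x}} \sim p(\mathbf{\tilde{x}}|\mathbf{x})$ selects a different Gaussian component $q_\phi(\mathbf{z}|\mathbf{\tilde{x}})$, with $p(\mathbf{\tilde{x}}|\mathbf{x})$ as the mixing measure, yielding a finite ($2^D$-component) mixture in the discrete case and an infinite mixture in the continuous case. Your Tonelli/normalization check and the Riemann-sum limiting argument simply make rigorous what the paper leaves informal.
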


Depending on whether the distribution is over a continuous or discrete variables,
the integral in Equation~\ref{eqn:pro1} can be replaced by a summation. It is instructive to consider the distribution over discrete domain to see 
that Equation~\ref{eqn:pro1} has a form of mixture of Gaussian - that is,
each time we sample $\mathbf{\tilde{x}} \sim p(\mathbf{\tilde{x}}|\mathbf{x})$ and 
substitute into $q(\mathbf{z}|\mathbf{\tilde{x}})$, we get different Gaussian distributions.

\begin{example}
    Let $\mathbf{x} \in \lbrace 0, 1\rbrace^D$ be a $D$-dimension observation, and consider a Bernoulli corruption distribution $p_{\bm{\pi}}(\mathbf{\tilde{x}}|\mathbf{x}) = Ber(\bm{\pi})$ around the input $\mathbf{x}$. Then,
    \begin{equation}
        \mathbb{E}_{p_{\bpi}(\mathbf{\tilde{x}}|\mathbf{x})} \left[ q_\phi(\mathbf{z}|\mathbf{\tilde{x}}) \right] 
        = \sum_{i=1}^{K} q_\phi(\mathbf{z}|\mathbf{\tilde{x}}_i)p_{\bm{\pi}}(\mathbf{\tilde{x}}_i|\mathbf{x})
    \end{equation}
    has the form of a finite mixture of Gaussian and the number of mixture component $K$ is $2^D$.
\end{example}

As mentioned in the previous section, usually a feedforward neural network is 
used for the inference network. In the case of the Bernoulli distribution 
as a corrupting distribution and $q_\phi(\mathbf{z}|\mathbf{\tilde{x}})$ is a Gaussian distribution,
we will have $2^D$ Gaussian mixture components and all of them share the parameter $\phi$.

\begin{example}
    Consider a Gaussian corruption model $p(\mathbf{\tilde{x}}|\mathbf{x}) = N(\mathbf{x}|\mathbf{0}, \sigma I)$.
    Let $q_\phi(\mathbf{z}|\mathbf{\tilde{x}})$ be a Gaussian inference network. 
    Then,
    \begin{equation}
        \mathbb{E}_{p(\mathbf{\tilde{x}}|\mathbf{x})} \left[ q_\phi(\mathbf{z}|\mathbf{\tilde{x}}) \right] 
        = \int_\mathbf{\tilde{x}} q_\phi(\mathbf{z}|\mathbf{\tilde{x}})p(\mathbf{\tilde{x}}|\mathbf{x}) d\mathbf{\tilde{x}}.
        \label{eqn:ex2_eqn}
    \end{equation}
    \begin{enumerate}
        \item If $q_\phi(\mathbf{z}|\phi\trns\mathbf{\tilde{x}}) = \mathcal{N}(\mathbf{z}|\mu=\phi\trns\mathbf{\tilde{x}},\sigma=\sigma^2I)$ 
    such that the mean parameter is a  linear model of weight vector $\phi$ and input $\mathbf{\tilde{x}}$,
    then the Equation~\ref{eqn:ex2_eqn} is a Gaussian distribution.\\
        \item If $q_\phi(\mathbf{z}|\mathbf{\tilde{x}}) = \mathcal{N}(\mathbf{z}|\mu(\mathbf{\tilde{x}}),\sigma(\mathbf{\tilde{x}}))$
    where $\mu(\mathbf{\tilde{x}})$ and $\sigma(\mathbf{\tilde{x}})$ are non-linear functions of $\mathbf{\tilde{x}}$,
    then the Equation~\ref{eqn:ex2_eqn} is an infinite mixture of Gaussian. 
    \end{enumerate}
\end{example}

In practice, there will be infinitely many number of Gaussian mixture components as in the second case, 
all of whose parameters are predicted by 
a single neural network. In other words, the inference neural network will learn which Gaussian distribution is needed for the given input $\btx$\footnote{Note that the mixture components are encoded in a vector form.}.


We can see this corruption procedure as adding a stochastic layer to the bottom of the inference 
network. For example, we can define a corruption network $p_{\bpi}(\btx|\bx)$ which is a neural
network where the input is $\bx$ and the output is stochastic units (e.g., Gaussian or Bernoulli 
distributions). Then, it is also possible to learn the parameter $\bpi$ of the corruption network 
by backpropagation using the re-parameterization trick. Note that a similar idea is explored in 
IWAE \citep{Burda2015}. However, our method is different in a sense that we use 
the denoising variational lower bound described below.

%
%

\subsection{The Denoising Variational Lower Bound}
\label{subsec:vidc}
Previously, we described that integrating the denoising criterion into 
the variational auto-encoding framework is equivalent to having a 
stochastic layer at the bottom of the inference network, and then 
estimating the variational lower bound becomes intractable because 
$\eE_{p(\btx|\bx)}[q_{\phi}(\bz|\btx)]$
requires integrating out 
the noise $\mathbf{\tilde{x}}$ for a corruption distribution. 
Before introducing the denoising variational lower bound, let us 
examine the variational lower bound when an additional stochastic layer is added to the inference network and integrate over the stochastic variables.
\begin{lemma}
Consider an approximate posterior distribution of the following form:
    \begin{displaymath}
        q_{\Phi}(\mathbf{z}|\mathbf{x}) = \int_{\mathbf{z}^\prime} q_{\varphi}(\mathbf{z}|\mathbf{z}^\prime)q_{\psi}(\mathbf{z}^\prime|\mathbf{x}) d\mathbf{z}^\prime,
    \end{displaymath}
    here, we use $\Phi = \{\varphi, \psi\}$. Then, given  $p_{\theta}( \mathbf{x}, \mathbf{z})
    = p_{\theta}(\mathbf{x}|\mathbf{z})p(\mathbf{z})$, we obtain the
    following inequality:
    \begin{displaymath}
        \log p_\theta(\mathbf{x}) \geq \mathbb{E}_{q_{\Phi}(\mathbf{z}|\mathbf{x})}
        \left[ \log \frac{p_\theta(\mathbf{x},\mathbf{z})}{q_{\varphi}(\mathbf{z}|\mathbf{z}^\prime)}\right] \geq 
        \mathbb{E}_{q_{\Phi}(\mathbf{z}|\mathbf{x})} \left[ \log \frac{p_\theta(\mathbf{x},\mathbf{z})}{q_{\Phi}(\mathbf{z}|\mathbf{x})}\right].
    \end{displaymath}
    \label{lemma1}
\end{lemma}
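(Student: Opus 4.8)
The plan is to treat the two displayed inequalities separately, using in both cases only the concavity of the logarithm together with the defining marginalization identity $q_\Phi(\mathbf{z}|\mathbf{x}) = \int q_\varphi(\mathbf{z}|\mathbf{z}^\prime) q_\psi(\mathbf{z}^\prime|\mathbf{x}) \, d\mathbf{z}^\prime$ as the workhorse. First I would make the sampling behind the middle quantity explicit: the expectation $\mathbb{E}_{q_\Phi(\mathbf{z}|\mathbf{x})}[\cdot]$ is realized by drawing $\mathbf{z}^\prime \sim q_\psi(\mathbf{z}^\prime|\mathbf{x})$ and then $\mathbf{z} \sim q_\varphi(\mathbf{z}|\mathbf{z}^\prime)$, so that $\mathbf{z}^\prime$ is available to the integrand $\log\frac{p_\theta(\mathbf{x},\mathbf{z})}{q_\varphi(\mathbf{z}|\mathbf{z}^\prime)}$. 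The rightmost quantity is then just the ordinary evidence lower bound for the marginal proposal $q_\Phi$, while the middle quantity is the same objective with the intractable marginal denominator $q_\Phi(\mathbf{z}|\mathbf{x})$ replaced by the tractable conditional $q_\varphi(\mathbf{z}|\mathbf{z}^\prime)$.

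For the first inequality I would condition on $\mathbf{z}^\prime$. For each fixed $\mathbf{z}^\prime$ the inner expectation is a textbook variational bound: moving the expectation inside the logarithm by Jensen gives $\log \int q_\varphi(\mathbf{z}|\mathbf{z}^\prime)\,\frac{p_\theta(\mathbf{x},\mathbf{z})}{q_\varphi(\mathbf{z}|\mathbf{z}^\prime)}\,d\mathbf{z} = \log p_\theta(\mathbf{x})$, since $q_\varphi(\mathbf{z}|\mathbf{z}^\prime)$ integrates to one. Because this upper bound $\log p_\theta(\mathbf{x})$ does not depend on $\mathbf{z}^\prime$, averaging over $q_\psi(\mathbf{z}^\prime|\mathbf{x})$ preserves it and yields $\log p_\theta(\mathbf{x}) \geq \mathbb{E}_{q_\Phi(\mathbf{z}|\mathbf{x})}\left[\log\frac{p_\theta(\mathbf{x},\mathbf{z})}{q_\varphi(\mathbf{z}|\mathbf{z}^\prime)}\right]$.

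For the second inequality I would form the difference of the two objectives. The numerators $\log p_\theta(\mathbf{x},\mathbf{z})$ are identical and cancel, leaving $\mathbb{E}[\log q_\Phi(\mathbf{z}|\mathbf{x}) - \log q_\varphi(\mathbf{z}|\mathbf{z}^\prime)]$, so it suffices to show $\mathbb{E}[\log q_\Phi(\mathbf{z}|\mathbf{x})] \geq \mathbb{E}[\log q_\varphi(\mathbf{z}|\mathbf{z}^\prime)]$. Here I would invoke concavity of the logarithm a second time, now against the mixture representation $q_\Phi(\mathbf{z}|\mathbf{x}) = \mathbb{E}_{q_\psi(\mathbf{z}^\prime|\mathbf{x})}[q_\varphi(\mathbf{z}|\mathbf{z}^\prime)]$: for each fixed $\mathbf{z}$, Jensen gives the pointwise bound $\log q_\Phi(\mathbf{z}|\mathbf{x}) \geq \mathbb{E}_{q_\psi(\mathbf{z}^\prime|\mathbf{x})}[\log q_\varphi(\mathbf{z}|\mathbf{z}^\prime)]$. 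Thus replacing the marginal denominator by the conditional one can only make $-\log(\cdot)$ larger in expectation, so the middle bound dominates the marginal lower bound, giving $\mathbb{E}_{q_\Phi(\mathbf{z}|\mathbf{x})}\left[\log\frac{p_\theta(\mathbf{x},\mathbf{z})}{q_\varphi(\mathbf{z}|\mathbf{z}^\prime)}\right] \geq \mathbb{E}_{q_\Phi(\mathbf{z}|\mathbf{x})}\left[\log\frac{p_\theta(\mathbf{x},\mathbf{z})}{q_\Phi(\mathbf{z}|\mathbf{x})}\right]$.

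The step I expect to require the most care is the bookkeeping of how $\mathbf{z}^\prime$ is coupled to $\mathbf{z}$ inside the middle expectation, because the two inequalities pull in opposite directions on exactly this point: the first is cleanest when $(\mathbf{z},\mathbf{z}^\prime)$ are drawn jointly from $q_\varphi(\mathbf{z}|\mathbf{z}^\prime)q_\psi(\mathbf{z}^\prime|\mathbf{x})$, whereas the pointwise Jensen bound used for the second integrates $\mathbf{z}^\prime$ against $q_\psi(\mathbf{z}^\prime|\mathbf{x})$ for a held-fixed $\mathbf{z}\sim q_\Phi$. I would therefore fix the sampling convention for the middle term at the outset and verify that, under that convention, the Jensen gap driving the second inequality is precisely the nonnegative quantity $\mathbb{E}_{q_\Phi(\mathbf{z}|\mathbf{x})}\big[\log q_\Phi(\mathbf{z}|\mathbf{x}) - \mathbb{E}_{q_\psi(\mathbf{z}^\prime|\mathbf{x})}[\log q_\varphi(\mathbf{z}|\mathbf{z}^\prime)]\big] \geq 0$; pinning down this convention is what turns the two separate Jensen applications into one consistent chain rather than two incompatible estimates, and it is the only place where the argument is genuinely delicate.
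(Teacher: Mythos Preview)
Your two-step plan is exactly the paper's: the left inequality comes from Jensen applied to the importance ratio $p_\theta(\mathbf{x},\mathbf{z})/q_\varphi(\mathbf{z}|\mathbf{z}')$ after conditioning on $\mathbf{z}'$, and the right inequality from the Gibbs inequality (the paper packages this as a preliminary ``Lemma~0'') comparing $\log q_\varphi(\mathbf{z}|\mathbf{z}')$ to $\log q_\Phi(\mathbf{z}|\mathbf{x})$. The concern you raise in your last paragraph about the coupling of $\mathbf{z}'$ and $\mathbf{z}$ is well placed, and it is not merely bookkeeping: it is a genuine gap that your proposed verification will not close, and the paper's own argument does not close it either.

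Under the joint sampling $\mathbf{z}'\sim q_\psi(\cdot\mid\mathbf{x})$, $\mathbf{z}\sim q_\varphi(\cdot\mid\mathbf{z}')$---the convention you use for the first inequality and the one the paper uses to define $\mathcal{L}_{dvae}$---the difference between the middle and rightmost expressions is
\[
\mathbb{E}_{q_\Phi(\mathbf{z}|\mathbf{x})}\bigl[\log q_\Phi(\mathbf{z}|\mathbf{x})\bigr]
-\mathbb{E}_{q_\psi(\mathbf{z}'|\mathbf{x})}\mathbb{E}_{q_\varphi(\mathbf{z}|\mathbf{z}')}\bigl[\log q_\varphi(\mathbf{z}|\mathbf{z}')\bigr]
= H(\mathbf{z}\mid\mathbf{z}') - H(\mathbf{z}) = -I(\mathbf{z};\mathbf{z}')\le 0,
\]
so the middle bound is \emph{smaller} than the ELBO with $q_\Phi$, reversing the claimed right inequality. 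The nonnegative gap you wrote down, $\mathbb{E}_{q_\Phi(\mathbf{z}|\mathbf{x})}\bigl[\log q_\Phi(\mathbf{z}|\mathbf{x}) - \mathbb{E}_{q_\psi(\mathbf{z}'|\mathbf{x})}[\log q_\varphi(\mathbf{z}|\mathbf{z}')]\bigr]$, arises only if $\mathbf{z}'$ is drawn \emph{independently} of $\mathbf{z}$; under that reading the right inequality does hold, but then the left one fails (take $q_\varphi(\mathbf{z}|\mathbf{z}')$ sharply concentrated around $\mathbf{z}'$ and the cross-entropy term blows up). The paper's proof makes exactly the silent switch you were worried about: it invokes its Lemma~0 with $f=q_\Phi(\mathbf{z}|\mathbf{x})$ and $g=q_\varphi(\mathbf{z}|\mathbf{z}')$ as though $\mathbf{z}'$ were fixed, while the expectation being manipulated is over the joint. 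There is no single sampling convention for the middle term under which both displayed inequalities hold as stated.
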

Refer to the Appendix for the proof. Note that $q_{\psi}(\mathbf{z}^\prime|\mathbf{x})$ can be either parametric or non-parametric distribution.
We can further show that this generalizes to multiple stochastic layers in the inference network.
\begin{theorem}
Consider an approximate posterior distribution of the following form 
    \begin{displaymath}
        q_{\Phi}(\mathbf{z}|\mathbf{x}) = \int_{\mathbf{z}^{1}\cdots \mathbf{z}^{L-1}}
        q_{\phi^L}(\mathbf{z}|\mathbf{z}^{L-1}) \cdots q_{\phi^1}(\mathbf{z}^1|\mathbf{x}) d\mathbf{z}^1\cdots d\mathbf{z}^{L-1}
    \end{displaymath}
    Then, given 
    $p_{\theta}( \mathbf{x}, \mathbf{z}) = p_{\theta}(\mathbf{x}|\mathbf{z})p(\mathbf{z})$, we obtain the following inequality:
    \begin{displaymath}
        \log p_\theta(\mathbf{x}) \geq \mathbb{E}_{q_\Phi(\mathbf{z}|\mathbf{x})}
        \left[ \log \frac{p_\theta(\mathbf{x},\mathbf{z})}{\prod^{L-1}_{i=1}q_{\phi^i}(\mathbf{z}^{i+1}|\mathbf{z}^i)}\right] \geq 
        \mathbb{E}_{q_\Phi(\mathbf{z}|\mathbf{x})} \left[ \log \frac{p_\theta(\mathbf{x},\mathbf{z})}{q_{\Phi}(\mathbf{z}|\mathbf{x})}\right],
    \end{displaymath}
    where $\mathbf{z} = \mathbf{z}^{L}$ and $\mathbf{x} = \mathbf{z}^{1}$.
    \label{theorem1}
\end{theorem}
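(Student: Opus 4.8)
The plan is to obtain Theorem~\ref{theorem1} as a one-step consequence of Lemma~\ref{lemma1} (equivalently, by induction on the depth $L$), using nothing beyond Jensen's inequality and the non-negativity of the Kullback--Leibler divergence, exactly as in the two-layer case.

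Concretely, I would collapse the bottom $L-1$ transformations of the inference chain into a single conditional
\[
  q_{\psi}(\mathbf{z}^{L-1}\,|\,\mathbf{x})\;:=\;\int q_{\phi^{L-1}}(\mathbf{z}^{L-1}\,|\,\mathbf{z}^{L-2})\cdots q_{\phi^{1}}(\mathbf{z}^{1}\,|\,\mathbf{x})\;d\mathbf{z}^{1}\cdots d\mathbf{z}^{L-2},
\]
which is an admissible (in general non-parametric) recognition distribution --- exactly the generality Lemma~\ref{lemma1} allows for its $q_{\psi}$. Setting $q_{\varphi}:=q_{\phi^{L}}$ and applying Lemma~\ref{lemma1} to the two-level pair $\big(q_{\psi}(\mathbf{z}^{L-1}\,|\,\mathbf{x}),\,q_{\phi^{L}}(\mathbf{z}\,|\,\mathbf{z}^{L-1})\big)$ immediately gives
\[
  \log p_{\theta}(\mathbf{x})\;\geq\;\mathbb{E}\!\left[\log\frac{p_{\theta}(\mathbf{x},\mathbf{z})}{q_{\phi^{L}}(\mathbf{z}\,|\,\mathbf{z}^{L-1})}\right]\;\geq\;\mathbb{E}_{q_{\Phi}(\mathbf{z}\,|\,\mathbf{x})}\!\left[\log\frac{p_{\theta}(\mathbf{x},\mathbf{z})}{q_{\Phi}(\mathbf{z}\,|\,\mathbf{x})}\right],
\]
where $q_{\Phi}(\mathbf{z}\,|\,\mathbf{x})=\int q_{\phi^{L}}(\mathbf{z}\,|\,\mathbf{z}^{L-1})\,q_{\psi}(\mathbf{z}^{L-1}\,|\,\mathbf{x})\,d\mathbf{z}^{L-1}$ is precisely the $L$-layer marginal in the statement. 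Re-expanding $q_{\psi}$ into its path integral then recovers the theorem as displayed: once the middle expectation is written over the full path $\mathbf{z}^{1},\dots,\mathbf{z}^{L}$ drawn from $q_{\phi^{L}}(\mathbf{z}\,|\,\mathbf{z}^{L-1})\cdots q_{\phi^{1}}(\mathbf{z}^{1}\,|\,\mathbf{x})$, every path factor cancels against the path density except those retained in the denominator of the middle term.

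Equivalently, one can argue in a single pass, mirroring the proof of Lemma~\ref{lemma1}: write $\log p_{\theta}(\mathbf{x})=\log\int p_{\theta}(\mathbf{x},\mathbf{z})\,d\mathbf{z}$, multiply the integrand by the normalized path density $q_{\phi^{L}}(\mathbf{z}\,|\,\mathbf{z}^{L-1})\cdots q_{\phi^{1}}(\mathbf{z}^{1}\,|\,\mathbf{x})$ (which marginalizes to $q_{\Phi}(\mathbf{z}\,|\,\mathbf{x})$ over $\mathbf{z}^{1},\dots,\mathbf{z}^{L-1}$), divide and multiply by the middle-term denominator, and invoke Jensen's inequality to get the first bound; for the second, subtract the two expressions and observe that the reconstruction terms $\mathbb{E}[\log p_{\theta}(\mathbf{x},\mathbf{z})]$ coincide, since $p_{\theta}(\mathbf{x},\mathbf{z})=p_{\theta}(\mathbf{x}\,|\,\mathbf{z})p(\mathbf{z})$ depends only on $\mathbf{z}=\mathbf{z}^{L}$ while $q_{\Phi}$ is exactly the $\mathbf{z}^{L}$-marginal of the path distribution --- the one place the structural assumption on $p_{\theta}$ is used --- so the gap collapses to a path-averaged conditional KL divergence $\mathbb{E}_{q}\big[\mathrm{KL}\big(q_{\phi^{L}}(\cdot\,|\,\mathbf{z}^{L-1})\,\big\|\,q_{\Phi}(\cdot\,|\,\mathbf{x})\big)\big]\geq 0$. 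The argument has essentially no analytic content; what I expect to be the main obstacle is the bookkeeping of this telescoping cancellation --- tracking exactly which of the $L$ path conditionals survives in the denominator, and applying the normalizations $\int q_{\phi^{i}}(\mathbf{z}^{i}\,|\,\mathbf{z}^{i-1})\,d\mathbf{z}^{i}=1$ in the right (top-down) order so that marginalizing the intermediate layers genuinely reproduces $q_{\Phi}(\mathbf{z}\,|\,\mathbf{x})$ and the identity $\mathbb{E}_{q}\big[p_{\theta}(\mathbf{x},\mathbf{z})/(\text{middle denom.})\big]=p_{\theta}(\mathbf{x})$ holds, the latter being precisely what licenses Jensen's inequality for the first bound.
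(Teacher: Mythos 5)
Your strategy---collapse the bottom $L-1$ conditionals into a single non-parametric $q_{\psi}(\mathbf{z}^{L-1}\,|\,\mathbf{x})$ and invoke Lemma~\ref{lemma1} exactly once---is genuinely different from the paper's proof, which applies the lemma $L$ times, peeling one layer out of the inner marginalization at each step. Unfortunately the collapsed version does not prove the theorem as stated, because it produces the wrong middle term. A single application of Lemma~\ref{lemma1} with $q_{\varphi}=q_{\phi^{L}}$ yields a bound whose denominator is the single conditional $q_{\phi^{L}}(\mathbf{z}\,|\,\mathbf{z}^{L-1})$, whereas the theorem's middle term has the full path product $\prod_{i=1}^{L-1}q_{\phi^{i}}(\mathbf{z}^{i+1}\,|\,\mathbf{z}^{i})$. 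Your claim that ``re-expanding $q_{\psi}$ into its path integral recovers the theorem as displayed'' because ``every path factor cancels against the path density'' is where the argument breaks: that cancellation is valid inside an expectation of a \emph{ratio} (which is what licenses Jensen), but not inside an expectation of a \emph{log} of a ratio. The two candidate middle terms differ by $\sum_{i}\mathbb{E}\bigl[\log q_{\phi^{i}}(\mathbf{z}^{i+1}\,|\,\mathbf{z}^{i})\bigr]$ over the lower layers, a sum of (negative) conditional entropies that is not zero in general. Moreover, for the full-product denominator the normalization you invoke to justify Jensen fails: the entire path density cancels, leaving
\[
\mathbb{E}_{q}\!\left[\frac{p_{\theta}(\mathbf{x},\mathbf{z})}{\prod_{i=1}^{L-1}q_{\phi^{i}}(\mathbf{z}^{i+1}\,|\,\mathbf{z}^{i})}\right]=\int p_{\theta}(\mathbf{x},\mathbf{z}^{L})\,d\mathbf{z}^{2}\cdots d\mathbf{z}^{L},
\]
which integrates $p_{\theta}(\mathbf{x})$ over the unbounded intermediate spaces and is not $p_{\theta}(\mathbf{x})$. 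Accumulating all the path conditionals into the denominator is precisely why the paper iterates the lemma layer by layer instead of applying it once; your shortcut proves (at best) a different chain of inequalities.

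Separately, your identification of the gap between the middle and right terms as $\mathbb{E}_{q}\bigl[\mathrm{KL}\bigl(q_{\phi^{L}}(\cdot\,|\,\mathbf{z}^{L-1})\,\|\,q_{\Phi}(\cdot\,|\,\mathbf{x})\bigr)\bigr]\ge 0$ has the sign reversed. Using, as you correctly note, that $\mathbb{E}[\log p_{\theta}(\mathbf{x},\mathbf{z})]$ is the same under the path distribution and under its $\mathbf{z}$-marginal $q_{\Phi}$, one gets
\[
\mathbb{E}\!\left[\log\frac{p_{\theta}(\mathbf{x},\mathbf{z})}{q_{\phi^{L}}(\mathbf{z}\,|\,\mathbf{z}^{L-1})}\right]-\mathbb{E}_{q_{\Phi}(\mathbf{z}|\mathbf{x})}\!\left[\log\frac{p_{\theta}(\mathbf{x},\mathbf{z})}{q_{\Phi}(\mathbf{z}\,|\,\mathbf{x})}\right]
=-\,\mathbb{E}_{\mathbf{z}^{L-1}}\bigl[\mathrm{KL}\bigl(q_{\phi^{L}}(\cdot\,|\,\mathbf{z}^{L-1})\,\|\,q_{\Phi}(\cdot\,|\,\mathbf{x})\bigr)\bigr]\le 0,
\]
i.e.\ minus the mutual information between $\mathbf{z}$ and $\mathbf{z}^{L-1}$, so the second inequality in the chain points the other way (with equality only when the top layer ignores its input). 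The paper's own proof suffers from the same difficulty---its Lemma~\ref{lemma0} is applied to $\mathbb{E}_{q_{\Phi}}[\log q_{\varphi}(\mathbf{z}\,|\,\mathbf{z}^{\prime})]$ with a ``$g$'' depending on $\mathbf{z}^{\prime}$ as well as $\mathbf{z}$, where that lemma does not apply---but in a blind reconstruction this is exactly the step to scrutinize rather than assert.
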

The proof is presented in the Appendix. Theorem~\ref{theorem1} illustrates that adding more stochastic layers gives tighter lower bound. 

We now use Lemma~\ref{lemma1} to derive the \textit{denoising variational lower bound}. For the approximate distribution $\tq_{\phi}(\bz|\bx) = \int q_\phi(\bz|\btx)p(\btx|\bx) d\btx$, we can write the standard variational lower bound as follows:
\begin{align}
    \log p_\theta(\mathbf{x}) \geq E_{\tq_{\phi}(\mathbf{z}|\mathbf{x})} \left[ \log \frac{p_\theta(\mathbf{x},\mathbf{z})}
                {q_{\phi}(\mathbf{z}|\mathbf{\tilde{x}})}\right]
            = E_{\tq_{\phi}(\mathbf{z}|\mathbf{x})} \left[ \log \frac{p_\theta(\mathbf{x},\mathbf{z})}{\tq_{\phi}(\bz|\bx)}\right] \eqdef \mathcal{L}_{cvae}.
  	\label{eqn:cvae}
\end{align} 
Applying Lemma~\ref{lemma1} to Equation~\ref{eqn:cvae}, we can pull out the expectation in the denominator outside of the log
and obtain the denoising variational lower bound:
\begin{equation}
    \mathcal{L}_{dvae} \eqdef E_{\tq_{\phi}(\mathbf{z}|\mathbf{x})} 
    \left[ \log  \frac{ p_\theta (\mathbf{x},\mathbf{z})}{ q_\phi(\mathbf{z}|\mathbf{\tilde{x}})}\right].
    \label{eqn:Ldvae}
\end{equation}
Note that the $p_\ta(\bx,\bz)$ in the numerator of the above equation is a function of $\bx$ not $\btx$. That is, given corrupted input $\btx$ (in the denominator), the $\cL_{dvae}$ objective tries to reconstruct the original input $\bx$ not the corrupted input $\btx$. This \textit{denoising} criterion is different from the popular \textit{data augmentation} approach where the model tries to reconstruct the corrupted input.

By the Lemma~\ref{lemma1}, we finally have the following: 
\bea 
\log p_\theta(\mathbf{x}) \geq \mathcal{L}_{dvae} \geq \mathcal{L}_{cvae}.
\eea

It is important to note that the above does not necessarily mean  that $\mathcal{L}_{dvae} \geq \mathcal{L}_{vae}$ where $\mathcal{L}_{vae}$ is the lower bound
of VAE with Gaussian distribution in the inference network. This is because
$\tq_{\phi}(\mathbf{z}|\mathbf{x})$ in $\mathcal{L}_{cvae}$ depends on a corruption distribution while $q_{\phi}(\mathbf{z}|\mathbf{x})$ in $\mathcal{L}_{vae}$ does not. 

Note also that $\tq_{\phi}(\mathbf{z}|\mathbf{x})$ has the capacity to cover a much broader class of 
distributions than $q_{\phi}(\mathbf{z}|\mathbf{x})$. The distributions that $q_{\phi}(\mathbf{z}|\mathbf{x})$ can cover is an instance of the distributions that $\tq_{\phi}(\mathbf{z}|\mathbf{x})$ can cover.
This makes it possible for $\mathcal{L}_{dvae}$ to be a tighter lower bound of $\log p_\theta(\bx)$ than $\mathcal{L}_{vae}$.
For example, suppose that $p(\mathbf{z}|\mathbf{x})$ consists of
multiple modes.
Then, $\tq_{\phi}(\mathbf{z}|\mathbf{x})$ has the potential 
of modeling more than a single mode,
whereas it is impossible to model multiple modes of $p(\mathbf{z}|\mathbf{x})$ 
from $q_{\phi}(\mathbf{z}|\mathbf{x})$ regardless of which lower bound of $\log p_\theta(\bx)$ is used as the objective function.
However, be aware that it is also possible to make the $\cL_{cvae}$ a looser lower bound than $\cL_{vae}$ by choosing a very inefficient corruption distribution $p(\mathbf{\tilde{x}}|\mathbf{x})$ such that 
it completely distorts the input $\mathbf{x}$ in such a way to lose all useful information required for the reconstruction, resulting in $\cL_{cvae} < \cL_{vae}$. 
Therefore, for $\mathcal{L}_{dvae}$, it is important to choose a sensible corruption distribution.

A question that arises when we consider $\mathcal{L}_{dvae}$ is what is the underlying meaning of maximizing $\cL_{dvae}$. As mentioned earlier, 
the aim of variational objective is to minimize the distributions between approximate 
posterior and true posterior distribution, i.e.
$\KL_{dvae} = \KL\left(\tq_\phi(\mathbf{z}\|\mathbf{x})||p(\mathbf{z}|\mathbf{x})\right) = \log p_\theta(\mathbf{x}) - \mathcal{L}_{cvae}$.
However, $\mathcal{L}_{dvae}$ definitely does not minimizes only the KL between 
approximate posterior and true posterior distribution as we can observe that
$\KL_{cvae} = \KL\left(\tq_\phi(\mathbf{z}\|\mathbf{x})||p(\mathbf{z}|\mathbf{x})\right) \geq \log p_\theta(\mathbf{x}) - \mathcal{L}_{dvae}$.
This illustrates that $\KL_{dvae} \leq \KL_{dvae}$.
Nonetheless, $\KL_{dvae}$ provides tractable way to optimize from the approximate posterior distribution $q(\mathbf{z}|\mathbf{z})$.
Thus, it is interesting to see the following proposition.

\begin{proposition} Maximizing $\cL_{dvae}$ is equivalent to minimizing the following objective 
\bea
\eE_{p(\btx|\bx)}[\KL(\tq_\phi(\bz|\btx)|| p(\bz|\bx))].\label{eqn:exp_KL1}
\eea
\label{prop2}
In other words, $\log p_\theta(\bx) = \cL_{dvae} + \eE_{p(\btx|\bx)}[\KL(\tq_\phi(\bz|\btx)|| p(\bz|\bx))]$. 
\end{proposition}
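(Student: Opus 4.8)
The plan is to derive the stated identity directly and then read off the claimed equivalence. I would start from the definition $\mathcal{L}_{dvae} = E_{\tq_\phi(\bz|\bx)}\!\left[\log\!\big(p_\theta(\bx,\bz)/q_\phi(\bz|\btx)\big)\right]$ in Equation~\eqref{eqn:Ldvae}, being careful to interpret the expectation as the nested one: a sample from $\tq_\phi(\bz|\bx)$ is generated by first drawing $\btx\sim p(\btx|\bx)$ and then $\bz\sim q_\phi(\bz|\btx)$, so $\mathcal{L}_{dvae}=E_{p(\btx|\bx)}E_{q_\phi(\bz|\btx)}\!\left[\log\!\big(p_\theta(\bx,\bz)/q_\phi(\bz|\btx)\big)\right]$. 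Spelling this out is what makes the denominator $q_\phi(\bz|\btx)$ meaningful in the first place, since $\btx$ appears nowhere in the marginal $\tq_\phi(\bz|\bx)$.

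Next I would use the model factorization $p_\theta(\bx,\bz)=p_\theta(\bx)\,p_\theta(\bz|\bx)$ to split the log-ratio as $\log p_\theta(\bx) + \log\!\big(p_\theta(\bz|\bx)/q_\phi(\bz|\btx)\big)$. Since $\log p_\theta(\bx)$ depends on neither $\btx$ nor $\bz$, it comes out of both expectations; the remaining term is $E_{p(\btx|\bx)}E_{q_\phi(\bz|\btx)}\!\left[\log\!\big(p_\theta(\bz|\bx)/q_\phi(\bz|\btx)\big)\right] = -\,E_{p(\btx|\bx)}\!\left[\KL\!\big(q_\phi(\bz|\btx)\,\|\,p_\theta(\bz|\bx)\big)\right]$, where I have recognized the inner expectation over $\bz\sim q_\phi(\bz|\btx)$ as (minus) a KL divergence. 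Rearranging yields $\log p_\theta(\bx) = \mathcal{L}_{dvae} + E_{p(\btx|\bx)}\!\left[\KL\!\big(q_\phi(\bz|\btx)\,\|\,p_\theta(\bz|\bx)\big)\right]$, which is the asserted identity (with the $\tq_\phi(\bz|\btx)$ in the proposition statement read as $q_\phi(\bz|\btx)$).

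Finally, the equivalence follows because $\log p_\theta(\bx)$ is constant in the recognition-network parameters $\phi$: maximizing $\mathcal{L}_{dvae}$ over $\phi$ is exactly the same as minimizing $E_{p(\btx|\bx)}[\KL(\cdot\,\|\,\cdot)]$. I do not expect any genuine obstacle here; the only points needing a line of care are (i) making explicit that the expectation defining $\mathcal{L}_{dvae}$ is over the joint draw $(\btx,\bz)$ rather than the marginal $\tq_\phi(\bz|\bx)$, and (ii) justifying the interchange of the $\btx$- and $\bz$-integrations used to pull out $\log p_\theta(\bx)$ and to collapse the inner integral into a KL term, which is fine by Tonelli/Fubini provided the corruption distribution is sensible enough that the expected KL is finite. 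The whole content of the argument is just the factorization $p_\theta(\bx,\bz)=p_\theta(\bx)p_\theta(\bz|\bx)$, mirroring the standard VAE ELBO decomposition.
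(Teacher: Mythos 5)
Your derivation is correct and is essentially the paper's own argument: both compute the gap $\log p_\theta(\bx) - \cL_{dvae}$ by reading the expectation as the nested draw $\btx\sim p(\btx|\bx)$, $\bz\sim q_\phi(\bz|\btx)$ and using $p_\theta(\bx,\bz)=p_\theta(\bx)p_\theta(\bz|\bx)$ to expose the inner KL term. The only cosmetic difference is that the paper detours through the intermediate identity $\log p_\theta(\bx)-\cL_{dvae}=\KL(\tq_\phi(\bz|\bx)\|p(\bz|\bx))-\KL(\tq_\phi(\bz|\bx)\|q_\phi(\bz|\btx))$ before collapsing the logs, whereas you go directly to the expected-KL form.
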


The proof is presented in the Appendix.
Proposition~\ref{prop2} illustrates that maximizing $\mathcal{L}_{dvae}$ is equivalent to minimizing the expectation 
of the KL between the true posterior distribution and approximate posterior distribution \textit{over all noised inputs} from $p(\btx|\bx)$. We believe that this is indeed an effective objective because the inference network tries to learn to map not only the training data point but also its corrupted variations to the true posterior distribution, resulting in a more robust training of the inference network to unseen data points. As shown in Theorem~\ref{theorem1}, this argument also applies for 
multiple stochastic layers of inference network.

\subsection{Training Procedure}
\label{sec:training_procedure}


One may consider a simple way of training VAE with the denoising 
criterion, which is similar to how the vanilla denoising 
autoencoder is trained: (i) sample a corrupted input $
\mathbf{\tilde{x}}^{(m)} \sim p(\mathbf{\tilde{x}}|\mathbf{x})$, 
(ii) sample $\mathbf{z}^{(l)} \sim q(\mathbf{z}|\mathbf{\tilde{x}}
^{(m)})$, and (iii) sample reconstructed images from the generative 
network $p_\ta(\mathbf{x}|\mathbf{z}^{(l)})$. This procedure is 
very similar to the regular VAE except that the input is corrupted 
by a noise distribution at every update. 


The above procedure can be seen as a special case of optimizing the following objective which can be easily approximated by Monte Carlo sampling.
\begin{align}
&\cL_{dvae} = \mathbb{E}_{q(\bz|\mathbf{\tilde{x}})} \mathbb{E}_{p(\mathbf{\tilde{x}}|\mathbf{x})} \left[
        \log \frac{p_\theta(\mathbf{x},\mathbf{z})}{ q_\phi(\mathbf{z}|\mathbf{\tilde{x}}) } \right]\simeq 
\f{1}{MK}\sm{m}{M} \sm{k}{K}\log \f{p_\phi(\bx, \bz^{(k|m)})}{q_\phi(\bz^{(k|m)} | \btx^{(m)})}
    \label{eqn:training_procedure1}
\end{align}
where $\btx^{(m)} \sim p(\btx | \bx)$ and $\bz^{(k|m)} \sim q_\phi(\bz | \btx^{(m)})$.
In the experiment section, we call the estimator of Equation~\ref{eqn:training_procedure1} DVAE. Although in the above we applied the denoising criterion for VAE (resulting in DVAE) as a demonstration, but the proposed procedure is applicable to other variational methods using inference networks as well. For example, the training procedure for IWAE with denoising
criterion can be formulated with Monte Carlo approximation: 
\begin{align}
    &\cL_{diwae} = \mathbb{E}_{q(\bz|\mathbf{\tilde{x}})} \mathbb{E}_{p(\mathbf{\tilde{x}}|\mathbf{x})} \left[
\log \sm{m}{M} \sm{k}{K}\frac{p_\theta(\mathbf{x},\mathbf{z}^{(k|m)})}{ q_\phi(\mathbf{z}^{(k|m)}|\mathbf{\tilde{x}}^{(m)}) } \right]\simeq 
        \log \f{1}{MK}\sm{m}{M} \sm{k}{K} \f{p_\phi(\bx, \bz^{(k|m)})}{q_\phi(\bz^{(k|m)} | \btx^{(m)})} .
        \label{eqn:training_procedure2}
\end{align}
where $\btx^{(m)} \sim p(\btx | \bx)$, $\bz^{(k|m)} \sim q_\phi(\bz | \btx^{(m)})$, and Monte Carlo sample size is set to 1.
We named the following estimator of Equation~\ref{eqn:training_procedure2} as DIWAE.

\section{Experiments} 
\label{sec:exp}
We conducted empirical studies of DVAE under the denoising variational
lower bound as discussed in Section~\ref{sec:dcvf}. 
To assess whether adding a denoising criterion to the variational auto-encoding
models enhance the performance or not, we tested on the denoising criterion
on VAE and IWAE throughout the experiments. 
As mentioned in Section~\ref{subsec:vidc}, 
since the choice of the corruption distribution is crucial, we compare on different corruption distributions of various noise levels.

We consider two datasets, the binarized MNIST dataset and the Frey face dataset.
The MNIST dataset contains 60,000 images for training and 10,000 images for test and each of the images is 28 $\times$ 28 pixels for handwritten digits from 0 to 9 \citep{LeCun1998}.  
Out of the 60,000 training examples, we used 10,000 examples as validation set to tune the hyper-parameters of our model.
We use the binarized version of MNIST, where each pixel of an image is sampled from $\{0,1\}$
according to its pixel intensity value.
The Frey Face\footnote{Available at http://www.cs.nyu.edu/$\sim$roweis/data.html.} 
dataset consists of 2000 images of Brendan Frey's face. We split the images
into 1572 training data, 295 validation data, and 200 test data. We normalized the images such that each pixel value ranges between $[0,1]$.

Throughout the experiments, we used the same neural network architectures for VAE and IWAE. Also, a single stochastic layer with 50 latent variables is used for both VAE and IWAE. For the  generation network, we used a neural network of two hidden layers each of which has 200 units. For the inference network, we tested two architectures, one with a single hidden layer and the other with two hidden layers. We then used 200 hidden units for both of them.
We used softplus activations for VAE and tanh activations for IWAE following the same configuration of the original papers of \citet{Kingma2014vae} and \citet{Burda2015}.
For binarized MNIST dataset, the last layer of the generative network
was sigmoid and the usual cross-entropy term was used. For the Frey Face dataset where the input value is real numbers, we used Gaussian stochastic units for the output layer of the generation network.  

For all our results, we ran 10-fold experiments.
We optimized all our models with ADAM \citep{Kingma2015}. We set the batch size to 100 and the learning rate was selected from 
a discrete range chosen based on the validation set.
We used 1 and 5 samples of $\mathbf{z}$ per update for VAE and 5 samples for IWAE. Note that using 1 sample for IWAE is equivalent to VAE.
The reported results were only trained with training set, not including the validation set.
\begin{table}[t]
\center
\caption{Negative variational lower bounds using different corruption levels on MNIST (the lower, the better). 
The salt-and-pepper noises are injected to data $\mathbf{x}$ during the training.}
\label{exp:mnist}
\begin{tabular}{ | l | c | c | c | c | c |}\hline
    \multirow{2}*{Model} & \# Hidden & \multicolumn{4}{c|}{Noise Level} \\ \cline{3-6}
                         & Layers    & 0 & 5 & 10 & 15 \\ \hline\hline
    DVAE  (K=1)    & 1  &  96.14 $\pm$ 0.09 & \bf{95.52 $\pm$ 0.12*} & \bf{96.12 $\pm$ 0.06} & 96.83 $\pm$ 0.17 \\
    DVAE  (K=1)    & 2  &  95.90 $\pm$ 0.23 & \bf{95.34 $\pm$ 0.17*} & \bf{95.65 $\pm$ 0.14} & 96.17 $\pm$ 0.17 \\ \hline
    DVAE  (K=5)    & 1  &95.20 $\pm$ 0.07 & \bf{95.01 $\pm$ 0.04*} & 95.55 $\pm$ 0.07 & 96.41 $\pm$ 0.11 \\ 
    DVAE  (K=5)    & 2  &95.01 $\pm$ 0.07 & \bf{94.71 $\pm$ 0.13*} & \bf{94.90 $\pm$ 0.22} & 96.41 $\pm$ 0.11 \\ \hline
    DIWAE  (K=5)   & 1  &  94.36 $\pm$ 0.07 & \bf{93.67 $\pm$ 0.10*} & \bf{93.97 $\pm$ 0.07} & \bf{94.35 $\pm$ 0.08} \\
    DIWAE  (K=5)   & 2  &  94.31 $\pm$ 0.07 & \bf{93.08 $\pm$ 0.08*} & \bf{93.35 $\pm$ 0.13} & \bf{93.71 $\pm$ 0.07} \\ \hline
\end{tabular}
\vspace{-0.1cm}
\end{table}

Following common practices of choosing a noise distribution, we deployed the {\em salt and pepper} noise to the binary MNIST dataset and Gaussian noise to the real-valued Frey Face dataset. Table~\ref{exp:mnist} presents the negative variational lower bounds with
respect to different corruption levels on the MNIST dataset. 
Similarly, Table~\ref{exp:Frey Face} presents the negative variational lower bound using unnormalized generation networks, with respect to different corruption levels on the Frey Face dataset. Note that when the corruption level is set to zero, DVAE and DIWAE are identical to VAE and IWAE, respectively.
%
\begin{table}[b]
\vspace{-0.05cm}
\center
\caption{Negative variational lower bound using  different corruption levels on the Frey Face
dataset. Gaussian noises are injected to data $\mathbf{x}$ during the training.}
\label{exp:Frey Face}
\begin{tabular}{ | l | c | c | c | c | c |}\hline
    \multirow{2}*{Model} & \# Hid.    & \multicolumn{4}{c|}{Noise Level} \\ \cline{3-6}
                         & Layers       & 0 & 2.5 & 5 & 7.5 \\ \hline\hline
    DVAE (K=1) & 1 &  1304.79 $\pm$ 5.71 & \bf{1313.74 $\pm$ 3.64*} & \bf{1314.48 $\pm$ 5.85} & 1293.07 $\pm$ 5.03 \\ 
    DVAE (K=1) & 2 &  1317.53 $\pm$ 3.93 & \bf{1322.40 $\pm$ 3.11*} & \bf{1319.60 $\pm$ 3.30} & 1306.07 $\pm$ 3.35 \\ \hline
    DVAE (K=5) & 1 &  1306.45 $\pm$ 6.13 & \bf{1320.39 $\pm$ 4.17*} & \bf{1313.14 $\pm$ 5.80} & 1298.40 $\pm$ 4.74 \\ 
    DVAE (K=5) & 2 &  1317.51 $\pm$ 3.81 & \bf{1324.13 $\pm$ 2.62*} & \bf{1320.99 $\pm$ 3.49} & \bf{1317.56 $\pm$ 3.94} \\ \hline
    DIWAE (K=5)& 1 &  1318.04 $\pm$ 2.83 & \bf{1320.18 $\pm$ 3.43} & \bf{1333.44 $\pm$ 2.74*} & 1305.38 $\pm$ 2.97 \\
    DIWAE (K=5)& 2 &  1320.03 $\pm$ 1.67 & \bf{1334.77 $\pm$ 2.69*} & \bf{1323.97 $\pm$ 4.15} & 1309.30 $\pm$ 2.95 \\ \hline
\end{tabular}
\end{table}

In the following, we analyze the results by answering questions on the experiments.

{\em \textbf{Q}: Does adding the denoising criterion improve the performance of variational autoencoders?}

Yes. All of the methods with denoising criterion surpassed the performance of vanilla VAE and vanilla IWAE as shown in
Table~\ref{exp:mnist} and Table~\ref{exp:Frey Face}. But, it is dependent on the choice of proper corruption level; for a large amount of noise, as we expected, it tends to perform worse than the vanilla VAE and IWAE. \\

{\em \textbf{Q}: How sensitive is the model for the type and the level of the noise?}

\begin{wrapfigure}{r}{0.63\textwidth}
    \vspace{-0.7cm}
    \includegraphics[width=0.60\textwidth]{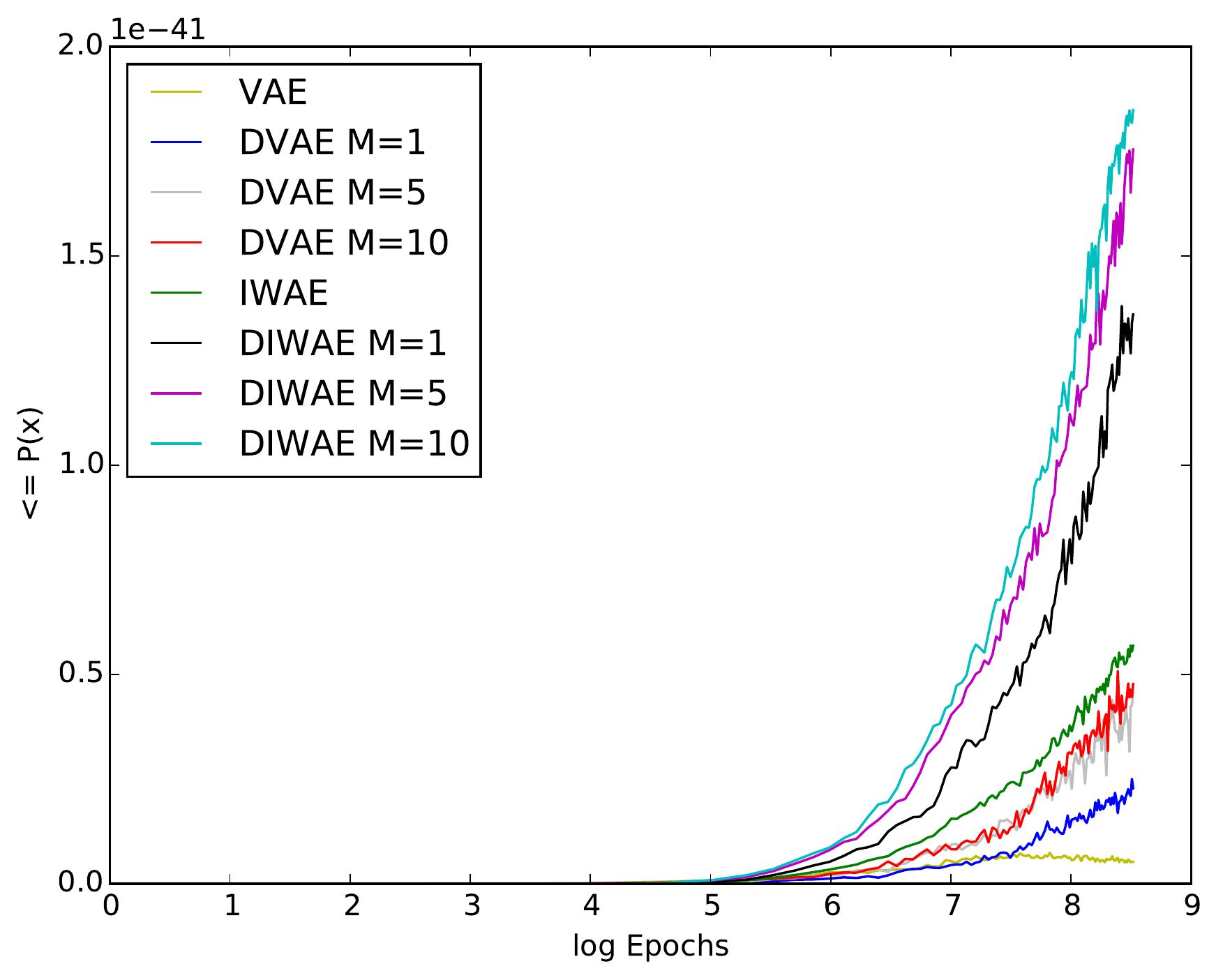}
    \vspace{-0.2cm}
    \caption{Denoising Variational Lower Bound for DVAE and DIWAE}
    \label{fig:expMLs}
\end{wrapfigure}
It seems that both of the models are not very sensitive with respect to the two types of noises: Gaussian and salt and pepper. They are more sensitive to the \textit{level} of the noise rather than the \textit{type}. Based on the experiments, the optimal corruption level lies in between $(0,5]$ since all of the results in that range are better than the one with $0$\% noise. It is natural to see this result considering that, when the noise level is excessive, (i) the model will lose information required to reconstruct the original input and that (ii) there will be large gap between the distributions of the (corrupted) training dataset and the test dataset.

{\em \textbf{Q}: How do the sample sizes $M$ affect to the result?}

In Figure~\ref{fig:expMLs}, we show the results on different configurations of $M$. 
As shown, increasing the sample size helps to converge faster in terms of the number of epochs 
and converge to better log-likelihood. The converged values of VAE are $94.97$, $94.44$, 
and $\bf{94.39}$ for $M=1,5,$ and $10$ respectively, and $93.17$, $92.89$, and $\bf{92.85}$ for IWAE.
Note, however, that increasing sample size requires more computation. 
Thus, in practice using $M=1$ seems a reasonable choice.

{\em \textbf{Q}: What happens when we replace the neural network in the inference network with some other types of model?}

Several applications have demonstrated that recurrent neural network can be more powerful
than neural netework. Here, we tried replacing neural network in the 
inference network with gated recurrent neural network that consist of single 
recurrent hidden layers with five time steps \citep{Chung2014}. 
We denote these models DVAE (GRU) and DIWAE (GRU) where GRU stands for gated recurrent
units.

Table~\ref{exp:gru} demonstrates the results with different noise level on MNIST dataset.
We notice that when VAE combined with GRU tend to severely overfit on the training data
and it actually performed worse than having a neural network at the inference network.
However, denoising criterion redeems the overfitting behaviour and produce much better
 results comparing with both VAE (GRU) and DVAE with regular neural networks.
Similarly, IWAE combined with GRU also showed overfitting behaviour although it gave
better results thatn DIWAE with neural networks. As well, DIWAE (GRU) gave the best 
performance among all models we experimented with.

\begin{table}[htp]
\center
\caption{Negative variational lower bounds using different corruption levels on MNIST (the lower, the better) 
    with recurrent neural network as a inference network. 
The salt-and-pepper noises are injected to data $\mathbf{x}$ during the training.}
\label{exp:gru}
\begin{tabular}{ | l | c | c | c | c | c |}\hline
    \multirow{2}*{Model} & \# Hidden & \multicolumn{4}{c|}{Noise Level} \\ \cline{3-6}
                         & Layers    & 0 & 5 & 10 & 15 \\ \hline\hline
    DVAE (GRU)    & 1 & 96.07 $\pm$ 0.17 & \bf{94.30 $\pm$ 0.09*} & \bf{94.32 $\pm$ 0.12} & \bf{94.88 $\pm$ 0.11} \\ 
    DIWAE (GRU)   & 1 & 93.94 $\pm$ 0.06 & \bf{93.13 $\pm$ 0.11} & \bf{92.84 $\pm$ 0.07*} & \bf{93.03 $\pm$ 0.04} \\\hline
\end{tabular}
\vspace{-0.1cm}
\end{table}

{\em \textbf{Q}: Data augmentation v.s. data corruption?}

Here, we consider specific data augmentation where our data lies in between 0 and 1, $\mathbf{x} \in (0,1)^D$
like MNIST. We consider a new binary data point $\mathbf{x}^\prime \in \lbrace 0, 1\rbrace^D$ 
where the previous data is treated as a probability of each pixel value turning 
on, i.e. $p(\mathbf{x}^\prime) = \mathbf{x}$. Then, we augment the data by sampling the data from $\mathbf{x}$
at every iteration. Although, this setting is not realistic, but we were curious whether the performance 
of this data augmentation compare to denoising criterion. The performance of such data augmentation on
MNIST gave $93.88 \pm 0.08$ and $92.51 \pm 0.07$ for VAE and IWAE. Comparing these negative log-likelihood with the
performance of DVAE and DIWAE, which were $94.32 \pm 0.37$ and $93.83 \pm 0.06$,
data augmentation VAE outperformed DVAE but data augmentation IWAE was worse than DIWAE.

{\em \textbf{Q}: Can we propose a more sensible noise distribution?}

For all the experiment results, we have used a simple corruption distribution using a global corruption rate (the parameter of the Bernoulli distribution or the variance of the Gaussian distribution) to all pixels 
in the images. To see if a more sensible corruption can lead to an improvement, we also tested another corruption distribution by obtaining a \textit{mean image}. Here, we obtained the mean image by averaging all training images and then used the pixel intensity of the mean image as the corruption rate so that each pixel has different corruption rate which  statistically encodes at some extent the pixel-wise noise from the entire dataset. However, we could not observe a noticeable improvement from this compared to the version with the global corruption rate, although we believed that this is a better way of designing the corrupting distribution. One interesting direction is to use a parameterized corruption distribution and learn the parameter. This will be advantageous because we can use our denoising variational lower bound which it is tighter than the classical variational lower bound on noisy inputs. We leave
this for the future work.



%
\section{Conclusions}

In this paper, we studied the denoising criterion for a general class of variational inference models where 
the approximate posterior distribution is conditioned on the input $\mathbf{x}$. 
The main result of our paper was to introduce the denoising variational lower 
bound which, provided a sensible corruption function, can be tighter than 
the standard variational lower bound on noisy inputs. We claimed that this training criterion makes it possible to learn more flexible and robust approximate posterior distributions such as the mixture of Gaussian than the standard training method without corruption. In the experiments, we empirically observed that the proposed method can consistently help to improve the performance 
for the variational autoencoder and the importance weighted autoencoder. Although we observed considerable improvements for our experiments with simple corruption distributions, how to obtain the sensible corruption distribution is still an important open question. We think that learning with a parametrized corruption distribution or obtaining a better heuristic procedure will be important for the method to be applied more broadly.

{\bf Acknowledgments}: We thank the developers of
Theano~\citep{bergstra+al:2010-scipy-small} for
their great work. We thank NSERC, Compute Canada, CIFAR and Samsung for their support.

\newpage
\bibliography{dcvaf}
\bibliographystyle{iclr2016_conference}
\newpage

\setcounter{theorem}{0} 
\setcounter{lemma}{-1} 
\setcounter{proposition}{0} 

\section*{appendix}
\subsection*{Denoising Criterion in Variational Framework} 

\begin{lemma}
    For all nonnegative measurable functions $f,g:\mathbb{R}[0,\infty)$
        that satisfies $\int^{\infty}_{-\infty} f(\mathbf{x}) d\mathbf{x} = 1$,
    \begin{displaymath}
        \int^{\infty}_{-\infty} f(\mathbf{x})\log g(\mathbf{x}) d\mathbf{x} 
        \leq \int^{\infty}_{-\infty} f(\mathbf{x})\log f(\mathbf{x}) d\mathbf{x}
    \end{displaymath}
    \label{lemma0}
\end{lemma}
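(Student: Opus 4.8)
The claim is a form of Gibbs' inequality (equivalently, the non-negativity of the Kullback--Leibler divergence), so the plan is to reduce it to the elementary bound $\log t \le t-1$, valid for all $t>0$, with equality iff $t=1$. First I would dispose of the trivial cases. Let $S = \{\mathbf{x} : f(\mathbf{x}) > 0\}$. Since $f$ is a density, only the behaviour of $g$ on $S$ affects the left-hand side, and outside $S$ both integrands vanish under the convention $0\log 0 = 0$. If $g(\mathbf{x}) = 0$ on a subset of $S$ of positive Lebesgue measure, then $\int_{\mathbb{R}} f \log g\, d\mathbf{x} = -\infty$ and the inequality holds trivially; so I may assume $g > 0$ on $S$.

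On $S$ I would then apply $\log t \le t - 1$ with $t = g(\mathbf{x})/f(\mathbf{x})$ and multiply through by $f(\mathbf{x}) \ge 0$:
\[
f(\mathbf{x}) \log \frac{g(\mathbf{x})}{f(\mathbf{x})} \;\le\; f(\mathbf{x})\left(\frac{g(\mathbf{x})}{f(\mathbf{x})} - 1\right) \;=\; g(\mathbf{x}) - f(\mathbf{x}).
\]
Integrating over $S$ gives $\int_S f\log g\, d\mathbf{x} - \int_S f \log f\, d\mathbf{x} \le \int_S g\, d\mathbf{x} - 1 \le 0$, where the last inequality uses $\int_S g \le \int_{\mathbb{R}} g \le 1$; since both integrands vanish off $S$, this is exactly the assertion of the lemma. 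Tracking equality in $\log t \le t-1$ shows the bound is tight precisely when $g/f = 1$ almost everywhere on $S$, i.e. $f = g$ a.e.

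The one genuine point that needs care — and which I would flag explicitly — is the normalization of $g$: the step $\int_S g - 1 \le 0$ requires $\int_{\mathbb{R}} g\, d\mathbf{x} \le 1$, so this should be included as a hypothesis (in every application of the lemma in the paper $g$ is itself a probability density, so $\int g = 1$ and the term simply drops). The remaining bookkeeping is routine: to justify splitting $\int_S f\log(g/f)$ as $\int_S f\log g - \int_S f\log f$ without producing an $\infty-\infty$, I would partition $S$ into $\{g \ge f\}$ and $\{g < f\}$; on the former $0 \le f\log(g/f) \le g$ is integrable, and on the latter $f\log(f/g) \ge 0$ so its integral is well-defined in $[0,\infty]$. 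Nothing deeper than this is involved, and I do not expect any real obstacle beyond stating the missing normalization condition on $g$.
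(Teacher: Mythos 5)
Your proof is correct, and it reaches the result by a genuinely different route from the paper. The paper's proof is a one-line application of Jensen's inequality to the concave function $\log$: it bounds $\mathbb{E}_{f}\left[\log \frac{g(\mathbf{x})}{f(\mathbf{x})}\right] \leq \log \mathbb{E}_{f}\left[\frac{g(\mathbf{x})}{f(\mathbf{x})}\right] = \log\left(\int g(\mathbf{x})\,d\mathbf{x}\right) = 0$, whereas you integrate the pointwise elementary bound $\log t \leq t-1$ applied to $t = g/f$. The two arguments are of course cousins (the pointwise bound is the standard way to prove Jensen for $\log$), but yours buys two things the paper's version glosses over: (i) an explicit treatment of the degenerate cases ($g$ vanishing on a positive-measure subset of $\{f>0\}$, and the potential $\infty - \infty$ when splitting $\int f\log(g/f)$ into two integrals, which the paper's final display does without comment), and (ii) an immediate characterization of the equality case. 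The paper's version is shorter and is the form most readers will recognize, but it silently uses $\int g\,d\mathbf{x} = 1$ in the last step. You are right to flag that this normalization (or at least $\int g \leq 1$) is a genuinely missing hypothesis: as stated, the lemma is false for general nonnegative $g$ (take $g = cf$ with $c > 1$), and the paper's own proof needs the same assumption. In the lemma's actual use, $g = q_\Phi(\mathbf{z}|\mathbf{x})$ is a probability density, so nothing downstream breaks, but the hypothesis should be added to the statement.
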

\begin{proof}
    Let $X$ be a random variable with probability density function $f(\mathbf{x})$.
    Consider the random variable $\log \left[ \frac{f(\mathbf{x})}{g(\mathbf{x})}\right]$.
    Consider $\mathbb{E}_{f(\mathbf{x})}\left[\log  \frac{g(\mathbf{x})}{f(\mathbf{x})}\right]$, which is
        $- \mathbb{E}_{f(\mathbf{x})}\left[\log  \frac{f(\mathbf{x})}{g(\mathbf{x})}\right]$,
    then, by Jensen's inequality, we have 
    \begin{displaymath}
        \mathbb{E}_{f(\mathbf{x})}\left[ \log \frac{g(\mathbf{x})}{f(\mathbf{x})}\right] 
            \leq \log \mathbb{E}\left[ f(\mathbf{x})\frac{g(\mathbf{x})}{f(\mathbf{x})}\right] 
            = \log \left(\int^{\infty}_{-\infty} g(\mathbf{x}) d\mathbf{x}\right) = 0
    \end{displaymath}   
    Thus, we showed that  $\mathbb{E}_{f(\mathbf{x})}\left[ \log g(\mathbf{x})\right] \leq 
                \mathbb{E}_{f(\mathbf{x})}\left[ \log f(\mathbf{x})\right].$
\end{proof}

\begin{lemma}
    Given a directed latent variable model that factorizes to 
    $p_{\theta}( \mathbf{x}, \mathbf{z}) = p_{\theta}(\mathbf{x}|\mathbf{z})p(\mathbf{z})$,
    consider an approximate posterior distribution that takes the form of  
    \begin{displaymath}
        q_\Phi(\mathbf{z}|\mathbf{x}) = \int_{\mathbf{z}^\prime} q_{\varphi}(\mathbf{z}|\mathbf{z}^\prime)q_{\psi}(\mathbf{z}^\prime|\mathbf{x}) d\mathbf{z}^\prime
    \end{displaymath}
    Then, we obtain the following inequality: 
    \begin{displaymath}
        \log p_\theta(\mathbf{x}) \geq \mathbb{E}_{q_{\Phi}(\mathbf{z}|\mathbf{x})}
        \left[ \log \frac{p_\theta(\mathbf{x},\mathbf{z})}{q_{\varphi}(\mathbf{z}|\mathbf{z}^\prime)}\right] \geq 
        \mathbb{E}_{q_{\Phi}(\mathbf{z}|\mathbf{x})} \left[ \log \frac{p_\theta(\mathbf{x},\mathbf{z})}{q_{\Phi}(\mathbf{z}|\mathbf{x})}\right].
    \end{displaymath}
\end{lemma}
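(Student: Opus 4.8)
The plan is to prove the two inequalities separately, each by a single application of Jensen's inequality, with the second being exactly Lemma~\ref{lemma0} applied conditionally. Throughout I would work with the joint $q_\varphi(\mathbf{z}|\mathbf{z}^\prime)\,q_\psi(\mathbf{z}^\prime|\mathbf{x})$ underlying the middle expression, whose $\mathbf{z}$-marginal is $q_\Phi(\mathbf{z}|\mathbf{x})$ by the hypothesis $q_\Phi(\mathbf{z}|\mathbf{x}) = \int_{\mathbf{z}^\prime} q_\varphi(\mathbf{z}|\mathbf{z}^\prime)q_\psi(\mathbf{z}^\prime|\mathbf{x})\,d\mathbf{z}^\prime$. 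For the left inequality, the key observation is that for each fixed $\mathbf{z}^\prime$ the density $q_\varphi(\cdot|\mathbf{z}^\prime)$ is an ordinary normalized distribution over $\mathbf{z}$, so the classical evidence-lower-bound manipulation applies: $\log p_\theta(\mathbf{x}) = \log \int_{\mathbf{z}} q_\varphi(\mathbf{z}|\mathbf{z}^\prime)\,\frac{p_\theta(\mathbf{x},\mathbf{z})}{q_\varphi(\mathbf{z}|\mathbf{z}^\prime)}\,d\mathbf{z} \ge \int_{\mathbf{z}} q_\varphi(\mathbf{z}|\mathbf{z}^\prime)\,\log\frac{p_\theta(\mathbf{x},\mathbf{z})}{q_\varphi(\mathbf{z}|\mathbf{z}^\prime)}\,d\mathbf{z}$ by concavity of $\log$. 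Since the left-hand side does not depend on $\mathbf{z}^\prime$, I would then take $\mathbb{E}_{q_\psi(\mathbf{z}^\prime|\mathbf{x})}$ of both sides, turning the right-hand side into $\mathbb{E}_{q_\psi(\mathbf{z}^\prime|\mathbf{x})}\mathbb{E}_{q_\varphi(\mathbf{z}|\mathbf{z}^\prime)}\!\left[\log\frac{p_\theta(\mathbf{x},\mathbf{z})}{q_\varphi(\mathbf{z}|\mathbf{z}^\prime)}\right]$, which is the middle expression of the lemma.

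For the inequality between the middle and right expressions, note that $p_\theta(\mathbf{x},\mathbf{z})$ is a function of $\mathbf{z}$ alone (with $\mathbf{x}$ fixed), so its expectation under the joint equals its expectation under the $\mathbf{z}$-marginal $q_\Phi(\mathbf{z}|\mathbf{x})$; hence the two expressions differ only through their denominators, and it suffices to compare $\mathbb{E}[-\log q_\varphi(\mathbf{z}|\mathbf{z}^\prime)]$ with $\mathbb{E}[-\log q_\Phi(\mathbf{z}|\mathbf{x})]$ under the joint. This is precisely Lemma~\ref{lemma0}: apply it for each fixed $\mathbf{z}^\prime$ with $f(\mathbf{z}) = q_\varphi(\mathbf{z}|\mathbf{z}^\prime)$ and $g(\mathbf{z}) = q_\Phi(\mathbf{z}|\mathbf{x})$ (both normalized in $\mathbf{z}$), then integrate the resulting pointwise inequality against $q_\psi(\mathbf{z}^\prime|\mathbf{x})$, using linearity of the integral to preserve the inequality. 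Equivalently, the gap between the two expressions equals $\pm\,\mathbb{E}_{q_\psi(\mathbf{z}^\prime|\mathbf{x})}\!\left[\KL\!\big(q_\varphi(\cdot|\mathbf{z}^\prime)\,\big\|\,q_\Phi(\cdot|\mathbf{x})\big)\right]$, whose magnitude is controlled by nonnegativity of the KL divergence.

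The step I expect to be the main obstacle is fixing the orientation of this last inequality with care: Lemma~\ref{lemma0} relates $\mathbb{E}_{q_\varphi(\cdot|\mathbf{z}^\prime)}[\log q_\varphi(\cdot|\mathbf{z}^\prime)]$ and $\mathbb{E}_{q_\varphi(\cdot|\mathbf{z}^\prime)}[\log q_\Phi(\cdot|\mathbf{x})]$ in one definite direction, and once combined with the first inequality it is this sign that determines the relative order of $\log p_\theta(\mathbf{x})$, the middle expression, and the standard ELBO $\mathbb{E}_{q_\Phi(\mathbf{z}|\mathbf{x})}[\log(p_\theta(\mathbf{x},\mathbf{z})/q_\Phi(\mathbf{z}|\mathbf{x}))]$; since it is easy to flip a sign when pushing the conditional inequality through the outer expectation over $\mathbf{z}^\prime$, I would double-check the direction on a minimal example (say $q_\varphi(\cdot|\mathbf{z}^\prime)$ a location shift of a fixed Gaussian and $q_\psi(\cdot|\mathbf{x})$ supported on two points) before committing to the write-up. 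Once the orientation is settled, chaining the two inequalities gives the statement, and the same argument — peeling one stochastic layer off the bottom of the inference network at a time and re-applying Lemma~\ref{lemma0} — yields the $L$-layer generalization in Theorem~\ref{theorem1}.
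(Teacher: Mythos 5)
Your handling of the first inequality is correct and is essentially the paper's own argument: the paper applies Jensen once to the joint expectation $\mathbb{E}_{q_{\psi}(\mathbf{z}^\prime|\mathbf{x})}\mathbb{E}_{q_{\varphi}(\mathbf{z}|\mathbf{z}^\prime)}$, while you condition on $\mathbf{z}^\prime$ first and then average over $q_\psi(\mathbf{z}^\prime|\mathbf{x})$; these are the same computation. The obstacle you flagged in the second inequality, however, is not a bookkeeping worry --- it is fatal to the statement as written. Under the only consistent reading of the middle expression, namely as an expectation over the joint $q_{\psi}(\mathbf{z}^\prime|\mathbf{x})q_{\varphi}(\mathbf{z}|\mathbf{z}^\prime)$ (which is forced by the first inequality, and is how $\mathcal{L}_{dvae}$ is actually estimated in Section~\ref{sec:training_procedure}), your own decomposition gives
\begin{align*}
\mathbb{E}\!\left[\log\frac{p_\theta(\mathbf{x},\mathbf{z})}{q_{\varphi}(\mathbf{z}|\mathbf{z}^\prime)}\right]
-\mathbb{E}_{q_{\Phi}(\mathbf{z}|\mathbf{x})}\!\left[\log\frac{p_\theta(\mathbf{x},\mathbf{z})}{q_{\Phi}(\mathbf{z}|\mathbf{x})}\right]
=-\,\mathbb{E}_{q_{\psi}(\mathbf{z}^\prime|\mathbf{x})}\!\left[\KL\!\left(q_{\varphi}(\cdot|\mathbf{z}^\prime)\,\|\,q_{\Phi}(\cdot|\mathbf{x})\right)\right]\le 0,
\end{align*}
i.e.\ the middle quantity is \emph{smaller} than the standard ELBO, not larger; the gap is the mutual information between $\mathbf{z}$ and $\mathbf{z}^\prime$ under $q$ (equivalently, $H(\mathbf{z})\ge H(\mathbf{z}|\mathbf{z}^\prime)$). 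Your proposed application of Lemma~\ref{lemma0} with $f=q_{\varphi}(\cdot|\mathbf{z}^\prime)$ and $g=q_{\Phi}(\cdot|\mathbf{x})$, integrated against $q_{\psi}(\mathbf{z}^\prime|\mathbf{x})$, is the correct orientation and yields exactly this reversed inequality; your proposed minimal example (two atoms for $\mathbf{z}^\prime$, two well-separated Gaussians for $q_{\varphi}$) will confirm it.

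The reason the paper's proof obtains the opposite sign is that it applies Lemma~\ref{lemma0} to $\mathbb{E}_{q_{\Phi}(\mathbf{z}|\mathbf{x})}\left[\log q_{\varphi}(\mathbf{z}|\mathbf{z}^\prime)\right]$ with the roles swapped, $f=q_{\Phi}(\cdot|\mathbf{x})$ and $g=q_{\varphi}(\cdot|\mathbf{z}^\prime)$, which treats $\mathbf{z}^\prime$ as an external constant even though $\mathbf{z}^\prime$ is a variable of integration inside $q_{\Phi}$; under that reading the expression is not well defined (and the first inequality would fail), and under the joint reading the step is invalid. The defensible conclusion is therefore
$\log p_\theta(\mathbf{x})\ \ge\ \mathbb{E}_{q_{\Phi}(\mathbf{z}|\mathbf{x})}\left[\log \left(p_\theta(\mathbf{x},\mathbf{z})/q_{\Phi}(\mathbf{z}|\mathbf{x})\right)\right]\ \ge\ \mathbb{E}\left[\log \left(p_\theta(\mathbf{x},\mathbf{z})/q_{\varphi}(\mathbf{z}|\mathbf{z}^\prime)\right)\right]$:
the middle expression is still a valid lower bound on $\log p_\theta(\mathbf{x})$ (by your first inequality), but a looser one, with equality only when $\mathbf{z}$ is independent of $\mathbf{z}^\prime$ under $q$. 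Do not contort your write-up to match the lemma's stated orientation; state the corrected chain, and note that the downstream claims ($\mathcal{L}_{dvae}\ge\mathcal{L}_{cvae}$, the "tighter with more stochastic layers" reading of Theorem~\ref{theorem1}, and the direction of the KL comparison around Proposition~\ref{prop2}) inherit the same reversal.
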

\begin{proof}
    Suppose that we have the following conditions:
    A directed latent variable model that factorizes to 
    $p_{\theta}( \mathbf{x}, \mathbf{z}) = p_{\theta}(\mathbf{x}|\mathbf{z})p(\mathbf{z})$,
    consider an approximate posterior distribution that takes the form of  
    \begin{displaymath}
        q_\Phi(\mathbf{z}|\mathbf{x}) = \int_{\mathbf{z}^\prime} q_{\varphi}(\mathbf{z}|\mathbf{z}^\prime)q_{\psi}(\mathbf{z}^\prime|\mathbf{x}) d\mathbf{z}^\prime
    \end{displaymath}

    By Jensen's inequality, we can see that this is also a lower bound of the marginal log-likelihood:
    \begin{align}
        \mathbb{E}_{q_{\psi}(\mathbf{z}^\prime|\mathbf{x})} \mathbb{E}_{q_{\varphi}(\mathbf{z}|\mathbf{z}^\prime)}\left[ 
                \log \frac{p_\theta(\mathbf{x},\mathbf{z})}{ q_\varphi(\mathbf{z}|\mathbf{z}^\prime) } \right] 
        \leq    
        \log \mathbb{E}_{q_{\psi}(\mathbf{z}^\prime|\mathbf{x})} \mathbb{E}_{q_{\varphi}(\mathbf{z}|\mathbf{z}^\prime)}
                \left[ \frac{p_\theta(\mathbf{x},\mathbf{z})}{ q_\varphi(\mathbf{z}|\mathbf{z}^\prime) } \right] 
                = \log p_\theta(\mathbf{x})\nonumber.
    \end{align}
    We showed that the left inequality holds, 
    and now, we show that the right inequality is also satisfied. 
    \begin{align}
        \mathbb{E}_{q_\Phi(\mathbf{z}|\mathbf{x})} \left[ \log \frac{p_\theta(\mathbf{x},\mathbf{z})}{q_{\varphi}(\mathbf{z}|\mathbf{z}^\prime)}\right]
        &=  \mathbb{E}_{q_\Phi(\mathbf{z}|\mathbf{x})} \left[ \log p_\theta(\mathbf{x},\mathbf{z}) \right]- 
            \mathbb{E}_{q_\Phi(\mathbf{z}|\mathbf{x})} \left[ \log q_{\Phi}(\mathbf{z}|\mathbf{z}^\prime)\right]\nonumber\\
            \intertext{By applying Lemma~\ref{lemma0} to the second term, we get}
        &\geq  \mathbb{E}_{q_\Phi(\mathbf{z}|\mathbf{x})} \left[ \log p_\theta(\mathbf{x},\mathbf{z}) \right]- 
            \mathbb{E}_{q_\Phi(\mathbf{z}|\mathbf{x})} \left[ \log q_{\Phi}(\mathbf{z}|\mathbf{x})\right]\nonumber\\
        &=  \mathbb{E}_{q_\Phi(\mathbf{z}|\mathbf{x})} \left[ \log \frac{p_\theta(\mathbf{x},\mathbf{z})}{q_{\Phi}(\mathbf{z}|\mathbf{x})}\right].\nonumber
    \end{align}
    Thus, we get
    \begin{displaymath}
        \log p_\theta(\mathbf{x}) \geq \mathbb{E}_{q_{\Phi}(\mathbf{z}|\mathbf{x})}
        \left[ \log \frac{p_\theta(\mathbf{x},\mathbf{z})}{q_{\varphi}(\mathbf{z}|\mathbf{z}^\prime)}\right] \geq 
        \mathbb{E}_{q_{\Phi}(\mathbf{z}|\mathbf{x})} \left[ \log \frac{p_\theta(\mathbf{x},\mathbf{z})}{q_{\Phi}(\mathbf{z}|\mathbf{x})}\right].
    \end{displaymath}
\end{proof}
\begin{theorem}
    Given a directed latent variable model that factorizes to 
    $p_{\theta}( \mathbf{x}, \mathbf{z}) = p_{\theta}(\mathbf{x}|\mathbf{z})p(\mathbf{z})$,
    consider an approximate posterior distribution that takes the form of  
    \begin{displaymath}
        q_\phi(\mathbf{z}|\mathbf{x}) = \int_{\mathbf{z}^{1}\cdots \mathbf{z}^{L-1}}
        q_{\phi^L}(\mathbf{z}|\mathbf{z}^{L-1}) \cdots q_{\phi^1}(\mathbf{z}^1|\mathbf{x}) d\mathbf{z}^1\cdots d\mathbf{z}^{L-1}
    \end{displaymath}
    Then, we obtain the following inequality:
    \begin{displaymath}
        \log p_\theta(\mathbf{x}) \geq \mathbb{E}_{q_\Phi(\mathbf{z}|\mathbf{x})}
        \left[ \log \frac{p_\theta(\mathbf{x},\mathbf{z})}{\prod^{L-1}_{i=1}q_{\phi^i}(\mathbf{z}^{i+1}|\mathbf{z}^i)}\right] \geq 
        \mathbb{E}_{q_\Phi(\mathbf{z}|\mathbf{x})} \left[ \log \frac{p_\theta(\mathbf{x},\mathbf{z})}{q_{\Phi}(\mathbf{z}|\mathbf{x})}\right],
    \end{displaymath}
    where $\mathbf{z} = \mathbf{z}^{L}$ and $\mathbf{x} = \mathbf{z}^{1}$ .
\end{theorem}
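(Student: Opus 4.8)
The plan is to prove Theorem~1 by induction on the number of stochastic layers $L$, using the preceding lemma (the $L=2$ case — a single extra stochastic layer in the inference network) both as the base case and as the engine of the inductive step, together with the entropy-type inequality $\int f\log g \le \int f\log f$ established just before it.

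For $L=2$ there is nothing to do: the statement is exactly that lemma (for $L=2$ the product $\prod_{i=1}^{L-1}q_{\phi^i}(\mathbf{z}^{i+1}\mid\mathbf{z}^i)$ collapses to the single top conditional). For the inductive step I would assume the claim for inference networks with $L-1$ stochastic layers and collapse the bottom $L-1$ transitions of the given $L$-layer chain into one effective layer by defining
\[
  q_{\Phi'}(\mathbf{z}^{L-1}\mid\mathbf{x}) \;=\; \int_{\mathbf{z}^{1}\cdots\mathbf{z}^{L-2}} q_{\phi^{L-1}}(\mathbf{z}^{L-1}\mid\mathbf{z}^{L-2})\cdots q_{\phi^{1}}(\mathbf{z}^{1}\mid\mathbf{x})\; d\mathbf{z}^{1}\cdots d\mathbf{z}^{L-2},
\]
so that $q_{\Phi}(\mathbf{z}\mid\mathbf{x}) = \int q_{\phi^{L}}(\mathbf{z}\mid\mathbf{z}^{L-1})\, q_{\Phi'}(\mathbf{z}^{L-1}\mid\mathbf{x})\, d\mathbf{z}^{L-1}$. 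This is precisely the two-layer form of the lemma, with its $q_{\varphi}$ and $q_{\psi}$ identified with $q_{\phi^{L}}$ and $q_{\Phi'}$; since the lemma explicitly allows $q_{\psi}$ to be an arbitrary, possibly non-parametric, distribution, the marginalized object $q_{\Phi'}$ is an admissible choice. Applying it yields
\[
  \log p_{\theta}(\mathbf{x}) \;\ge\; \mathbb{E}_{q_{\Phi}(\mathbf{z}\mid\mathbf{x})}\!\left[\log\frac{p_{\theta}(\mathbf{x},\mathbf{z})}{q_{\phi^{L}}(\mathbf{z}\mid\mathbf{z}^{L-1})}\right] \;\ge\; \mathbb{E}_{q_{\Phi}(\mathbf{z}\mid\mathbf{x})}\!\left[\log\frac{p_{\theta}(\mathbf{x},\mathbf{z})}{q_{\Phi}(\mathbf{z}\mid\mathbf{x})}\right].
\]

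It then remains to pass from the middle quantity above, whose denominator contains only the top transition $q_{\phi^{L}}$, to the middle quantity claimed in the theorem, whose denominator is the full product $\prod_{i=1}^{L-1}q_{\phi^{i}}(\mathbf{z}^{i+1}\mid\mathbf{z}^{i})$. I would obtain this by peeling the layers off one at a time from the inside: at each step expose the next inner conditional in the denominator by multiplying and dividing, invoke $\int f\log g \le \int f\log f$ on the appropriate pair of nonnegative densities (exactly as in the proof of the $L=2$ lemma) to exchange a marginal for its conditional, and integrate the freed intermediate latent out against its own conditional — or, equivalently, feed the remaining $(L-1)$-layer sub-chain straight into the inductive hypothesis. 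Concatenating these steps produces the full chain of inequalities in the statement, with $\mathbf{z}=\mathbf{z}^{L}$ and $\mathbf{x}=\mathbf{z}^{1}$.

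The routine ingredients are Jensen's inequality (for the outer $\log p_{\theta}(\mathbf{x})\ge\cdot$ bound, where the ratio's expectation telescopes back to $p_{\theta}(\mathbf{x})$ after integrating out the intermediate latents against their conditionals) and the base case. I expect the real work, and the main pitfall, to be the second inequality together with the bookkeeping it forces: one has to check that every use of $\int f\log g\le\int f\log f$ is oriented so that replacing a per-layer conditional by the corresponding marginal genuinely moves the bound in the claimed direction — this is the concavity-of-entropy (equivalently, convexity-of-KL) step in disguise, and it is exactly where a sign slip would go unnoticed — and that at each peel the still-unintegrated latents are being averaged against the correct conditionals, so that the telescoping collapse producing $q_{\Phi}(\mathbf{z}\mid\mathbf{x})$ in the denominator is actually valid.
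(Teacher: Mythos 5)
Your overall strategy---use the two-layer lemma as both base case and inductive engine, collapsing the lower layers into one effective stochastic layer and then peeling conditionals into the denominator one at a time---is essentially the same iterated-lemma argument the paper gives in its appendix, just reorganized as an induction. The problem is that the one step you defer, and correctly flag as the ``main pitfall,'' is exactly the step that does not close. Each peel asks you to replace a marginal by a conditional in the denominator while the outer expectation stays over the full joint $Q(\mathbf{z}^{1},\dots,\mathbf{z}^{L}\mid\mathbf{x})=\prod_i q_{\phi^i}$. Writing out the difference of the two sides of a single peel gives
\begin{displaymath}
\mathbb{E}_{Q}\left[\log q_{\phi^{i}}(\mathbf{z}^{i+1}\mid\mathbf{z}^{i})\right]-\mathbb{E}_{Q}\left[\log q(\mathbf{z}^{i+1}\mid\mathbf{x})\right]
= I\left(\mathbf{z}^{i+1};\mathbf{z}^{i}\mid\mathbf{x}\right)\;\geq\;0,
\end{displaymath}
i.e.\ conditioning reduces entropy, so the bound with the conditional in the denominator is \emph{looser}, not tighter, than the one with the marginal: the claimed second inequality points the wrong way. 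The Gibbs inequality $\int f\log g\leq\int f\log f$ cannot be invoked with $f=q_{\Phi}(\mathbf{z}\mid\mathbf{x})$ and $g=q_{\phi^{L}}(\mathbf{z}\mid\mathbf{z}^{L-1})$, because $g$ depends on the co-sampled $\mathbf{z}^{L-1}$; applying Gibbs legitimately (conditionally on each $\mathbf{z}^{L-1}$, which forces the roles of $f$ and $g$ to swap) reverses the direction. A second, independent defect of the target expression: once the denominator contains two or more conditional densities of continuous intermediate variables, $\mathbb{E}_{Q}\bigl[p_\theta(\mathbf{x},\mathbf{z})/\prod_i q_{\phi^i}\bigr]$ diverges, so Jensen no longer delivers the leftmost inequality either.

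To be fair, the paper's own appendix proof commits the identical misapplication of its Lemma 0 (it treats $q_{\varphi}(\mathbf{z}\mid\mathbf{z}^\prime)$ as a fixed density of $\mathbf{z}$ inside $\mathbb{E}_{q_{\Phi}(\mathbf{z}\mid\mathbf{x})}[\log q_{\varphi}(\mathbf{z}\mid\mathbf{z}^\prime)]$), so you have faithfully reconstructed the intended argument, gap included. What is actually provable is
\begin{displaymath}
\log p_\theta(\mathbf{x})\;\geq\;\mathbb{E}_{q_\Phi(\mathbf{z}\mid\mathbf{x})}\left[\log \frac{p_\theta(\mathbf{x},\mathbf{z})}{q_\Phi(\mathbf{z}\mid\mathbf{x})}\right]\;\geq\;\mathbb{E}_{Q}\left[\log \frac{p_\theta(\mathbf{x},\mathbf{z})}{q_{\phi^L}(\mathbf{z}\mid\mathbf{z}^{L-1})}\right],
\end{displaymath}
i.e.\ both quantities are valid lower bounds but their order is the reverse of what the theorem asserts; no amount of bookkeeping in the peeling step will make it succeed in the stated direction.
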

\begin{proof}
    The sketch of the proof is basically applying the Lemma~\ref{lemma1} consecutively $L$ many times to the classical variational lower bound.
    \begin{align}
        \mathbb{E}_{q_\phi(\mathbf{z}|\mathbf{x})} &\left[ \log \frac{p_\theta(\mathbf{x},\mathbf{z})}{q_{\phi}(\mathbf{z}|\mathbf{x})}\right]
        =  \mathbb{E}_{q_\phi(\mathbf{z}|\mathbf{x})} \left[ \log \frac{p_\theta(\mathbf{x},\mathbf{z})}
        {\mathbb{E}_{q_{\phi^1}(\mathbf{z}^1|\mathbf{x})} \cdots \mathbb{E}_{q_{\phi^L}(\mathbf{z}|\mathbf{z}^{L-1})}               
    \left[ \prod^{L-1}_{i=1}q_{\phi^i}(\mathbf{z}^{i+1}|\mathbf{z}^i)\right]}\right]\nonumber
        \intertext{Applying Lemma~\ref{lemma1} to layer 1, we get}
                & \leq  \mathbb{E}_{q_\phi(\mathbf{z}|\mathbf{x})} 
                \left[ \log \frac{p_\theta(\mathbf{x},\mathbf{z})}
                    {q_{\phi^1}(\mathbf{z}^2|\mathbf{z}^1) \mathbb{E}_{q_{\phi^2}(\mathbf{z}^2|\mathbf{z}^1)} \cdots \mathbb{E}_{q_{\phi^L}(\mathbf{z}|\mathbf{z}^{L-1})}               
                \left[ \prod^{L-1}_{i=2}q_{\phi^i}(\mathbf{z}^{i+1}|\mathbf{z}^i)\right]}\right] \nonumber\\
        \intertext{ Applying Lemma~\ref{lemma1} to layer 2, we get}
                & \leq  \mathbb{E}_{q_\phi(\mathbf{z}|\mathbf{x})} 
                \left[ \log \frac{p_\theta(\mathbf{x},\mathbf{z})}
                    {q_{\phi^2}(\mathbf{z}^3|\mathbf{z}^2)q_{\phi^1}(\mathbf{z}^2|\mathbf{z}^1) \mathbb{E}_{q_{\phi^3}(\mathbf{z}^3|\mathbf{z}^2)} \cdots \mathbb{E}_{q_{\phi^L}(\mathbf{z}|\mathbf{z}^{L-1})}               
            \left[ \prod^{L-1}_{i=3}q_{\phi^i}(\mathbf{z}^{i+1}|\mathbf{z}^i)\right]}\right] \nonumber
        \intertext{ Repeatly applying Lemma~\ref{lemma1} to layers from 3 to $L$ , we get}
            &\leq \mathbb{E}_{q_\phi(\mathbf{z}|\mathbf{x})} 
        \left[ \log \frac{p_\theta(\mathbf{x},\mathbf{z})}{\prod^{L-1}_{i=1}q_{\phi^i}(\mathbf{z}^{i+1}|\mathbf{z}^i)}\right]\nonumber\\
            &\leq \log p_\theta(\mathbf{x})\nonumber
    \end{align}
    Therefore, the inequalities in Theorem~\ref{theorem1} are satisfied. 
\end{proof}

\begin{proposition}
    \begin{align}
        \argmax_{\phi,\theta} \mathcal{L}_{dvae} &\equiv \argmin_{\phi,\theta} 
    \left[\KL\left(\tq_\phi(\mathbf{z}|\mathbf{x})\|p(\mathbf{z}|\mathbf{x})\right) - \KL\left(\tq_\phi(\mathbf{z}|\mathbf{x}) \| q_\phi(\mathbf{z}|\mathbf{\tilde{x}})\right) \right]\nonumber\\
    &\equiv \argmin_{\phi,\theta}\mathbb{E}_{p(\mathbf{\tilde{x}}|\mathbf{x})} \left[ \KL\left( q_\phi(\mathbf{z}|\mathbf{\tilde{x}})\|p(\mathbf{z}|\mathbf{x})\right)\right]\nonumber
    \end{align}
\end{proposition}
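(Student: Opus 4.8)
The plan is to establish the exact decomposition $\log p_\ta(\bx) = \mathcal{L}_{dvae} + \eE_{p(\btx|\bx)}\!\left[\KL\!\left(q_\phi(\bz|\btx)\,\|\,p(\bz|\bx)\right)\right]$ by a direct calculation; the asserted equivalence between maximizing $\mathcal{L}_{dvae}$ and minimizing the expected KL is then just a restatement of this identity (read, as usual for ELBO-type bounds and exactly as for the ordinary VAE identity $\log p_\ta(\bx) = \mathcal{L}_{vae} + \KL$, with $\log p_\ta(\bx)$ held fixed, since it does not involve $\phi$). The first and most important step is to pin down the measure in the expectation defining $\mathcal{L}_{dvae}$ in Equation~\eqref{eqn:Ldvae}: because the integrand contains $q_\phi(\bz|\btx)$, that expectation must be read against the \emph{joint} law $(\btx,\bz)\sim p(\btx|\bx)\,q_\phi(\bz|\btx)$, whose $\bz$-marginal is $\tq_\phi(\bz|\bx)$ — this is precisely the sampling scheme used in the Monte Carlo training procedure of Equation~\eqref{eqn:training_procedure1}, so it is the intended reading. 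Under it, $\mathcal{L}_{dvae} = \eE_{p(\btx|\bx)}\eE_{q_\phi(\bz|\btx)}\!\left[\log p_\ta(\bx,\bz) - \log q_\phi(\bz|\btx)\right]$.

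Next I would expand the KL term in the target decomposition the same way, $\eE_{p(\btx|\bx)}\!\left[\KL\!\left(q_\phi(\bz|\btx)\,\|\,p(\bz|\bx)\right)\right] = \eE_{p(\btx|\bx)}\eE_{q_\phi(\bz|\btx)}\!\left[\log q_\phi(\bz|\btx) - \log p(\bz|\bx)\right]$, and apply Bayes' rule to the true posterior, $p(\bz|\bx) = p_\ta(\bx,\bz)/p_\ta(\bx)$, so that $\log p(\bz|\bx) = \log p_\ta(\bx,\bz) - \log p_\ta(\bx)$. The term $\log p_\ta(\bx)$ is constant in $(\btx,\bz)$ and therefore pulls out of the double expectation, and the two surviving terms are exactly $-\mathcal{L}_{dvae}$. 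Rearranging yields $\log p_\ta(\bx) = \mathcal{L}_{dvae} + \eE_{p(\btx|\bx)}[\KL(q_\phi(\bz|\btx)\,\|\,p(\bz|\bx))]$; since the KL is non-negative this also recovers the inequality $\log p_\ta(\bx) \ge \mathcal{L}_{dvae}$ asserted just before the proposition.

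From the decomposition, $\argmax_{\phi,\ta}\mathcal{L}_{dvae} = \argmin_{\phi,\ta}\eE_{p(\btx|\bx)}[\KL(q_\phi(\bz|\btx)\,\|\,p(\bz|\bx))]$, which is the statement; the intermediate rewriting recorded in the Appendix, in terms of $\KL(\tq_\phi(\bz|\bx)\,\|\,p(\bz|\bx))$ and the cross term $\KL(\tq_\phi(\bz|\bx)\,\|\,q_\phi(\bz|\btx))$, is then obtained by additionally substituting the standard ELBO identity $\mathcal{L}_{cvae} = \log p_\ta(\bx) - \KL(\tq_\phi(\bz|\bx)\,\|\,p(\bz|\bx))$ for $\tq_\phi$ together with the relation between $\mathcal{L}_{dvae}$ and $\mathcal{L}_{cvae}$ that comes from inserting $\tq_\phi(\bz|\bx)$ inside the log of $\mathcal{L}_{dvae}$. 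I do not expect a genuine analytic obstacle: the whole argument is a one-line manipulation, and the only thing that really requires care is the bookkeeping flagged above — keeping the inner expectation against the conditional $q_\phi(\bz|\btx)$ rather than the marginal $\tq_\phi(\bz|\bx)$, which is what makes $-\log q_\phi(\bz|\btx)$ and Bayes' rule combine cleanly term by term.
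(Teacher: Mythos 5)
Your proposal is correct and follows essentially the same route as the paper's proof: a direct calculation establishing the exact decomposition $\log p_\theta(\bx) = \mathcal{L}_{dvae} + \eE_{p(\btx|\bx)}\left[\KL\left(q_\phi(\bz|\btx)\,\|\,p(\bz|\bx)\right)\right]$, with the key bookkeeping step being the identification of $\eE_{\tq_\phi(\bz|\bx)}$ with the nested expectation $\eE_{p(\btx|\bx)}\eE_{q_\phi(\bz|\btx)}$ (which the paper performs near the end of its derivation and you perform at the start). The only difference is the order of steps — the paper passes through the intermediate form $\KL(\tq_\phi\|p) - \KL(\tq_\phi\|q_\phi)$ before collapsing to the expected KL, while you derive the expected-KL form directly via Bayes' rule and recover the intermediate form afterwards — and both treatments share the same caveat that the equivalence of argmax and argmin is only strict with $\theta$ held fixed.
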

\begin{proof}
    Let us consider $\theta$ beging fixed just for the sake of simpler analysis.
    \begin{align}
        \log p_\theta(\mathbf{x}) - \mathcal{L}_{dvae}
        &= \log p_\theta(\mathbf{x}) -\mathbb{E}_{\tq_\phi(\mathbf{z}|\mathbf{x})} 
         \left[ \log\frac{p_\theta(\mathbf{x},\mathbf{z})}{q_\phi(\mathbf{z}|\mathbf{\tilde{x}})} \right]\nonumber\\
        &= \mathbb{E}_{\tq_\phi(\mathbf{z}|\mathbf{x})} 
         \left[ \log \frac{q_\phi(\mathbf{z}|\mathbf{\tilde{x}})}{\tq_\phi(\mathbf{z}|\mathbf{x})} \right]
             + \KL\left(\tq_\phi(\mathbf{z}|\mathbf{x})\|p(\mathbf{z}|\mathbf{x})\right)\nonumber\\
        &= \KL\left(\tq_\phi(\mathbf{z}|\mathbf{x})\|p(\mathbf{z}|\mathbf{x})\right) - \KL\left(\tq_\phi(\mathbf{z}|\mathbf{x})\|q(\mathbf{z}|\mathbf{\tilde{x}})\right)\\
             \intertext{Thus we achieved the first equality.}
        &= \mathbb{E}_{\tq_\phi(\mathbf{z}|\mathbf{x})} 
          \left[ \log \frac{q_\phi(\mathbf{z}|\mathbf{\tilde{x}})}{\tq_\phi(\mathbf{z}|\mathbf{x})} \right]
         + \mathbb{E}_{\tq_\phi(\mathbf{z}|\mathbf{x})} \left[ \log \frac{\tq_\phi(\mathbf{z}|\mathbf{x})}{p(\mathbf{z}|\mathbf{x})}\right]\nonumber\\
        &= \mathbb{E}_{\tq_\phi(\mathbf{z}|\mathbf{x})} 
          \left[ \log \frac{q_\phi(\mathbf{z}|\mathbf{\tilde{x}})\tq_\phi(\mathbf{z}|\mathbf{x})}{\tq_\phi(\mathbf{z}|\mathbf{x})p(\mathbf{z}|\mathbf{x})} \right]\nonumber\\
        &= \mathbb{E}_{\tq_\phi(\mathbf{z}|\mathbf{x})} 
          \left[ \log \frac{q_\phi(\mathbf{z}|\mathbf{\tilde{x}})}{p(\mathbf{z}|\mathbf{x})} \right]\nonumber\\
        &= \mathbb{E}_{\tq_\phi(\mathbf{\tilde{x}}|\mathbf{x})} \mathbb{E}_{q_\phi(\mathbf{z}|\mathbf{\tilde{x}})} 
            \left[ \log \frac{q_\phi(\mathbf{z}|\mathbf{\tilde{x}})}{p(\mathbf{z}|\mathbf{x})} \right]\\
            &= \mathbb{E}_{p(\mathbf{\tilde{x}}|\mathbf{x})} \big[ \KL\left(q_\phi(\mathbf{z}|\mathbf{\tilde{x}})\|p(\mathbf{z}|\mathbf{x})\right) \big]
    \end{align}
    Thus, we achieved the second equality. Overall, maximizing $\mathcal{L}_{dvae}$ is equivalent to minimizing the expectation of KL between 
    the true posterior distribution and approximate posterior distribution for each noised input.
\end{proof}

\end{document}